\newcommand{\drc}{\textsc{DReg}\xspace}
\newtheorem{theorem}{Theorem}[section]
\def\eqref#1{equation~\ref{#1}}
\def\1{\bm{1}}
\DeclareMathAlphabet{\mathsfit}{\encodingdefault}{\sfdefault}{m}{sl}
\SetMathAlphabet{\mathsfit}{bold}{\encodingdefault}{\sfdefault}{bx}{n}
\def\Prob{{\mathbb{P}}}
\newcommand{\E}{\mathbb{E}}
\newcommand{\w}{{\bm w}}
\newcolumntype{L}[1]{>{\raggedright\let\newline\\\arraybackslash\hspace{0pt}}m{#1}}
\begin{document}
\title{Selective Learning: Towards Robust Calibration with Dynamic Regularization}
\author{Zongbo Han, Yifeng Yang, Changqing Zhang, Linjun Zhang, Joey Tianyi Zhou, Qinghua Hu
\IEEEcompsocitemizethanks{
\IEEEcompsocthanksitem Z. Han, Y. Yang, C. Zhang and Q. Hu are with the College of Intelligence and Computing, Tianjin University, Tianjin 300072, China.
\IEEEcompsocthanksitem L. Zhang is with Department of Statistics, Rutgers University, Piscataway, NJ 08854-8019. \IEEEcompsocthanksitem J. T. Zhou is with the A*STAR Centre for Frontier AI Research (CFAR), Singapore 138632.
}
}
\markboth{In submission}%
{Shell \MakeLowercase{\textit{et al.}}: Bare Demo of IEEEtran.cls for Computer Society Journals}

\IEEEtitleabstractindextext{%
\begin{abstract}
\justifying
Miscalibration in deep learning refers to there is a discrepancy between the predicted confidence and performance. This problem usually arises due to the overfitting problem, which is characterized by learning everything presented in the training set, resulting in overconfident predictions during testing. Existing methods typically address overfitting and mitigate the miscalibration by adding a maximum-entropy regularizer to the objective function. The objective can be understood as seeking a model that fits the ground-truth labels by increasing the confidence while also maximizing the entropy of predicted probabilities by decreasing the confidence. However, previous methods lack clear guidance on confidence adjustment, leading to conflicting objectives (increasing but also decreasing confidence). Therefore, we introduce a method called Dynamic Regularization (\drc), which aims to learn what should be learned during training thereby circumventing the confidence adjusting trade-off. At a high level, \drc aims to obtain a more reliable model capable of acknowledging what it knows and does not know. Specifically, \drc effectively fits the labels for in-distribution samples (samples that should be learned) while applying regularization dynamically to samples beyond model capabilities (e.g., outliers), thereby obtaining a robust calibrated model especially on the samples beyond model capabilities. Both theoretical and empirical analyses sufficiently demonstrate the superiority of \drc compared with previous methods.
\end{abstract}
\begin{IEEEkeywords}
Trustworthy Learning, Calibration, Dynamical Regularization, Uncertainty Estimation.
\end{IEEEkeywords}}
\maketitle
\IEEEdisplaynontitleabstractindextext
\IEEEpeerreviewmaketitle

\section{Introduction}
\IEEEPARstart{D}{}\textcolor{black}{eep neural networks have achieved remarkable progress on various tasks \cite{lecun2015deep}. However, current deep learning classification models often suffer from poor calibration, i.e., their confidence scores fail to accurately reflect the accuracy of the classifier, especially exhibiting excessive confidence during test time \cite{guo2017calibration}. This leads to challenges when deploying them to safety-critical downstream applications like autonomous driving \cite{bojarski2016end} or medical diagnosis \cite{esteva2017dermatologist}, since the decisions cannot be trusted even with high confidences. Therefore, accurately calibrating the confidence of deep classifiers is essential for the successful deployment of neural networks.}
\begin{figure*}[!tbp]
\centering
\includegraphics[width=1\linewidth,height=0.16\linewidth]{./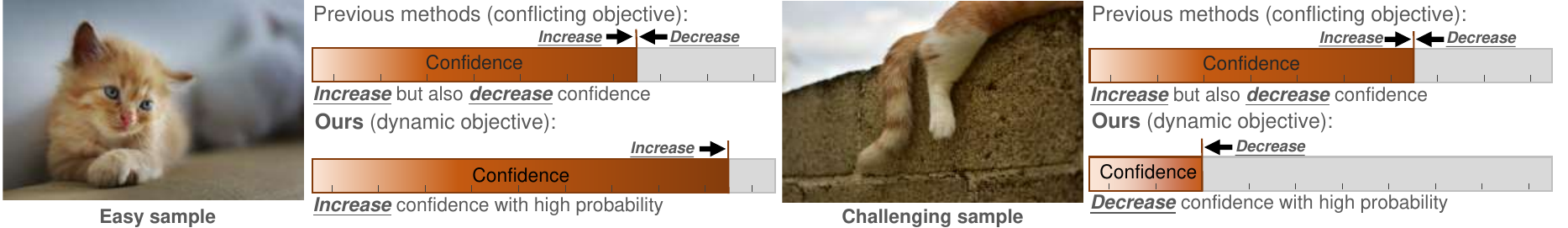}
\caption{\label{fig:motivation} The motivation of the \drc. Previous regularization-based methods aim for accurate classification while maximizing predictive entropy, resulting in conflicting optimization goals of simultaneously increasing and decreasing confidence. To address this issue, \drc dynamically applies regularization to avoid these conflicting optimization objectives, assigning higher confidence to easy samples and lower confidence to challenging samples.}
\end{figure*}
\textcolor{black}{Various regularization-based methods are proposed to avoid overfiting on the training dataset and thereby calibrate deep classifiers not to be overconfident. Specifically, existing empirical evidences reveal that employing strong weight decay and label smoothing can improve calibration performance effectively \cite{guo2017calibration,muller2019does}. More generally, most previous methods directly modify the objective function by implicitly or explicitly adding a maximum-entropy regularizer during training, such as penalizing confidence \cite{pereyra2017regularizing}, focal loss and its extension methods \cite{mukhoti2020calibrating,wang2021rethinking, ghosh2022adafocal, tao2023dual}, and logit normalization \cite{wei2022mitigating}. Specifically, they prevent the model from overfitting on training samples through regularization, thereby alleviating the model's overconfidence problem. The underlying objective of these methods can be intuitively understood as minimizing the classification loss by \textit{increasing confidence} corresponding to the ground-truth label, while simultaneously maximizing the predictive entropy of the predicted probability by \textit{decreasing confidence}. In short, previous methods strive to adjust the confidence while accurately classifying to avoid miscalibration.}

\textcolor{black}{However, existing methods lack clear guidance on when to increase confidence or decrease confidence, which may lead to unreliable prediction when training in practical scenarios where the training set contains both easy samples and challenging samples (e.g., outliers). Specifically, on the one hand, existing methods attempt to apply regularization to prevent the model from overly confident, which may lead to lower confidence even for easy samples that should be classified accurately with high confidence. On the other hand, they still strive to reduce the classification loss by increasing the confidence corresponding to the ground-truth label to fit the labels of these samples, which could lead to higher confidence for challenging samples that are difficult to classify accurately. Therefore, when the training set contains both easy and challenging samples, the potential limitations of existing methods are exposed. Unfortunately, such scenarios are prevalent, arising from various reasons such as differences in inherent sample difficulty \cite{seedat2022data,lorena2019complex, vasudevan2022does, toneva2018empirical}, data augmentation \cite{yun2019cutmix, cubuk2020randaugment} or the existence of multiple subgroups in the data \cite{yao2022improving,han2022umix,yang2023change}. Meanwhile, due to limited model capacity, we may also not be able to accurately classify samples in all cases.}


\textcolor{black}{In summary, the lack of clear supervision on how to adjust confidence may lead to three significant problems of existing methods. (1) They may overly regularize the predictions of model, resulting in higher predicted entropy even for easy samples, potentially leading to an \textit{{under-confident}} model \cite{wang2021rethinking}. In other words, the model might fail to accurately classify samples that it should have been able to recognize effectively. (2) They enforce the model to \textit{{over-confidently}} classify all training samples, even though that may be potentially challenging samples (e.g., outliers) beyond the model capabilities. This will result in the model assigning higher confidence to the classification of samples that are inherently difficult to classify accurately. (3) Striving for accurate classification on the training samples while simultaneously maximizing prediction entropy represents two opposing goals, making it \textit{difficult in balancing} between the two objectives.} 

\textcolor{black}{To this end, we propose a novel method called dynamic regularization (\drc) to solve the above three problems in a unified framework. At a high level, the proposed method is designed to ensure that the model learns selectively during training what it is capable of learning, while acknowledging its limitations in accurately classifying the remaining samples. As illustrated in Fig.~\ref{fig:motivation}, our expectation is that the proposed model will confidently classify samples within its capabilities, while refraining from making decisions on samples that pose a challenge. To achieve this goal, we implicitly construct a probability of whether a sample should be known to the model and then impose dynamic regularization to different samples. Specifically, we avoid imposing regularization to increase predicted entropy for easy samples, which prevents undermining model confidence for easy samples (samples that should be learned). Meanwhile, \drc prevents deep learning models from miscalibration by increasing the predicted entropy on potential outlier samples (samples beyond model capabilities). In this way, \drc can elegantly balance the two opposing goals of accurately classifying training samples and maximizing the entropy of predicted probabilities. By actively distinguishing which sample should be learned, \drc can achieve more robust calibration, especially when the challenging samples are presented in the test set. Both theoretical and experimental results demonstrate the effectiveness of \drc. }

\textcolor{black}{
The contributions of this paper are as follows:
\begin{itemize}
\item We summarize the core principle shared by existing regularization-based calibration methods, and demonstrate the underlying limitations due to their indiscriminate regularization on all samples and lack of clear confidence supervision.
\item We propose a novel paradigm that improves previous regularization-based calibration approaches by informing the model what it should and should not be learned, avoiding problems caused by the lack of explicit confidence guidance in previous works. 
\item We provide theoretical analysis that proves the superiority of \drc in achieving lower calibration error compared to previous regularization-based methods.
\item We conduct extensive experiments on various settings and datasets. The experimental results, with \drc achieving the best performance, strongly suggest \drc outperforms previous approaches in calibration.
\end{itemize}}

\section{Related Work}

In this section, we first present several closely related research topics, including confidence calibration, uncertainty estimation, and out-of-distribution (OOD) detection. We then summarize how proposed method \drc differs from previous regularization-based approaches in above research topics.

\textbf{Confidence calibration}. A well-calibrated classifier can be approached through two main orthogonal methods: post-hoc calibration and regularization-based calibration. In post-hoc calibration, after the classifier is trained, its predicted confidence is adjusted by training extra parameters on a validation set to improve calibration without modifying the original model \cite{guo2017calibration,yu2022robust}. An example is temperature scaling \cite{guo2017calibration}, where a temperature parameter is trained on the validation set  to scale the predicted probability distribution. On the other hand, regularization-based calibration avoids classifier miscalibration by incorporating regularization techniques during the training of deep neural networks\cite{guo2017calibration, DBLP:conf/icml/KumarSJ18, muller2019does}. This includes strategies such as training with strong weight decay \cite{guo2017calibration}, label smoothing \cite{muller2019does}, penalizing confidence \cite{pereyra2017regularizing}, focusing on under-confident samples \cite{mukhoti2020calibrating, ghosh2022adafocal, wang2021rethinking, tao2023dual}, and constraining the norm of logits \cite{wei2022mitigating,wei2023mitigating}. Note that among the two main orthogonal methods of post-hoc calibration and regularization-based calibration, the main research focus of this paper is on the latter. 

\textbf{Uncertainty estimation}. The primary goal of uncertainty estimation is to obtain the reliability of the model. Traditional uncertainty estimation methods utilize ensemble learning \cite{lakshminarayanan2017simple,liu2019accurate} and Bayesian neural networks \cite{neal2012bayesian,mackay1992practical,denker1990transforming,kendall2017uncertainties} to obtain the distribution of predictions, from which uncertainty can be estimated using the variance or entropy of the distribution. Recently, several regularization-based uncertainty estimation methods have been proposed. These methods typically obtain uncertainty by applying regularization to neural networks using either additional out-of-distribution dataset \cite{malinin2018predictive} or training set samples \cite{sensoy2018evidential}.

\textbf{Out-of-distribution (OOD) detection}. OOD detection aims to distinguish potential out-of-distribution samples during test time to avoid unreliable predictions. One of the primary approaches is to modify the loss function during training, thereby constraining the deep neural network to effectively identify potential OOD samples. Specifically, existing methods usually impose constraints on the model, such as having a uniform prediction distribution \cite{lee2017training,hendrycks2018deep,choi2023conservative} or higher energy \cite{katz2022training,liu2020energy,du2021vos,chen2021atom}, particularly on the additional outlier training set.

\textbf{Comparison with existing methods}. The core principle shared by regularization-based approaches in above topics is to impose regularization on neural networks during training to prevent overfitting. These methods can be roughly categorized into two types: (1) \textit{standard training without additional training data} -- These methods aim to simultaneously achieve accurate classification and regularize the model to maximize prediction entropy on the training dataset. However, as shown in the introduction, there is an underlying trade-off between achieving high accuracy and high entropy, making it  challenging to effectively balance these two objectives. (2) \textit{training utilizing additional outlier data} -- Methods along this line leverage external additional outlier datasets to regularize the model. The prohibitively large sample space of  outliers requires additional sample selection strategies to obtain outliers that can effectively regularize the  behavior of the model \cite{ming2022poem}. However, the artificially designed sample selection strategies may introduce biases into the distribution of outlier samples, resulting in a failure to characterize the true distribution of OOD data encountered at test time. Besides, how to effectively utilize outlier samples to improve calibration is still an open problem. In contrast to the aforementioned methods, \drc aims to explore and leverage information from naturally occurring challenging samples within the training set that have a high probability of being similar to ones encountered at test time. In this way, \drc eliminates the need for additional external outlier datasets and the corresponding sample selection strategies, while avoiding the trade-off between model performance and regularization \drc.

\section{Dynamic Regularization for Calibration}
Previous regularization-based calibration methods typically face challenges due to conflicting objectives: achieving accurate classification while maximizing predictive entropy. To address this challenge, we aim to identify and learn from samples that the model can accurately classify during training, while concurrently avoiding decisions on potentially challenging samples that present a high risk of misclassification. We start by introducing the problem setting of neural network calibration, then analyze existing regularization methods in neural networks for calibration, and finally present our proposed approach.
\subsection{Problem Setting}
\textbf{Notations}. Let $\mathcal{X}$ and $\mathcal{Y}=\{1,2,\cdots,K\}$ be the input and label space respectively, where $K$ denotes the number of classes. The training dataset $\mathcal{D}=\{\boldsymbol{x}_i,y_i\}_{i=1}^n$ consists of $n$ samples independently drawn from a training distribution $P$ over $\mathcal{X}\times \mathcal{Y}$. The label $y$ can also be represented as a one-hot vector $\boldsymbol{y}=[y^1,\cdots,y^K]$, where $y^k = \mathbbm{1}(k=y)$ and $\mathbbm{1}(\cdot)$ is the indicator function. The goal of the classification task is to train a model $f_{\boldsymbol{\theta}}: \mathcal{X}\rightarrow \Delta^{K}$ parameterized with $\boldsymbol{\theta} \in \Theta$, where $\Delta^{K}$ denotes  the classification probability space. This model can produce a predictive probability vector $f_{\boldsymbol{\theta}}(\boldsymbol{x}) = [f_{\boldsymbol{\theta}}^1(\boldsymbol{x}),\cdots,f_{\boldsymbol{\theta}}^K(\boldsymbol{x})]$ for input $\boldsymbol{x}$. The predicted label and the corresponding confidence score can be obtained with $\hat{y}:= \arg \max f_{\boldsymbol{\theta}}(\boldsymbol{x})$ and $\hat{\Prob}(\boldsymbol{x}):=\max f_{\theta}(\boldsymbol{x})$, respectively. Moreover, $\hat \Prob(\boldsymbol{x})$ is generally considered as the estimated probability that $\hat{y}$ is the correct label for $\boldsymbol{x}$.

\textbf{Confidence calibration}. A model is considered perfectly calibrated \cite{guo2017calibration} when the estimated probability or confidence score $\hat{\Prob}(\boldsymbol{x})$ precisely matches the probability of the model classifying $\boldsymbol{x}$ correctly, i.e., $\Prob(\hat{y}=y\mid \hat{\Prob}(\boldsymbol{x})=p)=p$ for all $p \in [0,1]$. For instance, for samples that obtain an estimated confidence of $0.8$, a perfectly calibrated neural network should have an accuracy of 80\%. However, recent empirical observations have revealed that widely used deep neural networks often exhibit overconfidence due to overfitting on the training data \cite{guo2017calibration,mukhoti2020calibrating}. The objective of confidence calibration is to obtain a model whose predicted confidence scores can accurately reflect the predictive accuracy of the model. 

\subsection{Regularization for Calibration}
Applying regularization on the model can prevent overfitting on the training dataset, thus alleviating overconfidence. 
Existing regularization-based methods typically incorporate implicit or explicit regularization on the model by modifying the objective function. Here, we first present four representative methods and then show that they share similar goals and limitations.

\textbf{Label smoothing} is a regularization technique that involves training the neural network with softened target labels $\boldsymbol{\widetilde{y}}=[\widetilde{y}^1,\cdots,\widetilde{y}^K]$ to improve the reliability of confidence \cite{muller2019does}, where $\widetilde{y}^k=(1-\epsilon)y^k+\epsilon /K$ and $0<\epsilon<1$ controls the strength of smoothing. This approach effectively prevents overfitting in deep neural networks and improves the reliability of confidence scores \cite{muller2019does}. 
When using cross-entropy loss, the label smoothing loss can be decomposed as follows:
\begin{equation}
\label{eq:ls}
\mathcal{L}_{ce}(f_{\boldsymbol{\theta}}(\boldsymbol{x}),\boldsymbol{\widetilde{y}})=(1-\epsilon)\mathcal{L}_{ce}(f_{\boldsymbol{\theta}}(\boldsymbol{x}),\boldsymbol{y}) + \epsilon \mathcal{L}_{ce}(f_{\boldsymbol{\theta}}(\boldsymbol{x}), \boldsymbol{p}_{u}),
\end{equation}
where $\mathcal{L}_{ce}$ denotes the cross-entropy loss and $\boldsymbol{p}_{u}=[1/K,\cdots,1/K]$ is a uniform distribution.

\textbf{Focal loss} and other related calibration methods implicitly regularize the deep neural network by increasing the weight of samples with larger losses. This prevents the model from overfitting due to excessive attention to simple samples \cite{mukhoti2020calibrating,ghosh2022adafocal,tao2023dual}. The classical focal loss is defined as $\mathcal{L}_f(f_{\boldsymbol{\theta}}(\boldsymbol{x}), \boldsymbol{y})=-(1-f_{\theta}^y(\boldsymbol{x}))^\gamma \log f_{\theta}^y(\boldsymbol{x})$, where $\gamma$ is a hyperparameter that controls the reweighting strength, allowing higher weight to be assigned to samples with lower confidence in the correct label. A general form of focal loss can be formulated as an upper bound of the regularized cross-entropy loss  \cite{mukhoti2020calibrating}, 
\begin{equation}
\label{eq:focal}
\mathcal{L}_f(f_{\boldsymbol{\theta}}(\boldsymbol{x}), \boldsymbol{y})\geq \mathcal{L}_{ce}(f_{\boldsymbol{\theta}}(\boldsymbol{x}), \boldsymbol{y})-\gamma \mathcal{H}(f_{\theta}(\boldsymbol{x})),
\end{equation}
where $\mathcal{H}(f_{\theta}(\boldsymbol{x}))$ represents the entropy of the predicted distribution.

\textbf{Evidential deep learning} \cite{sensoy2018evidential} constructs the classification outputs as Dirichlet distributions $Dir(\boldsymbol{\alpha})$ with parameters $\boldsymbol{\alpha}=[\alpha^1,\cdots,\alpha^K]$, and then minimizes the expected distance between the obtained Dirichlet distributions and the labels. Meanwhile, the regularization is conducted by minimizing the KL divergence between the obtained Dirichlet distributions and the uniform distribution. The loss function is formulated as follows:
\begin{equation}
\label{eq:evi}
\sum_{k=1}^{K} y^{k}\left(\psi(S)-\psi(\alpha^{k})\right) + \gamma KL[Dir(\tilde{\boldsymbol{\alpha}}), Dir([1,\cdots, 1])],
\end{equation}
where $S=\sum_{k=1}^K \alpha^k$ is the Dirichlet strengthes, $\psi(\cdot)$ represents the digamma function, $\gamma$ is a hyperparameter, $\tilde{\boldsymbol{\alpha}} = \boldsymbol{y} + (1-\boldsymbol{y})\odot\boldsymbol{\alpha}$, and $Dir([1,\cdots, 1])$ is a uniform Dirichlet distribution. 

\textbf{Penalizing confidence} \cite{pereyra2017regularizing} suggest a confidence penalty term to prevent the deep neural networks from overfitting and producing overconfident predictions. Formally, the loss function of penalizing confidence is defined as:
\begin{equation}
\label{eq:pc}
\mathcal{L}_{ce}(f_\theta(\boldsymbol{x}),\boldsymbol{y}) - \gamma \mathcal{H}(f_\theta(\boldsymbol{x})),
\end{equation}
where $\gamma$ is a hyperparameter to control the penalizing strength.

It can be seen from Eq.~\ref{eq:ls}, Eq.~\ref{eq:focal}, Eq.~\ref{eq:evi} and Eq.~\ref{eq:pc} that the objective of regularization-based calibration algorithms involves minimizing the classification loss while maximizing the predictive entropy. Specifically, the first objective typically represents a common classification loss (e.g., cross-entropy loss) that minimizes the discrepancy between $f_{\boldsymbol{\theta}}(\boldsymbol{x})$ and $\boldsymbol{y}$. Its goal is to enable the model to accurately classify training set samples with high confidence.
The second objective serves as the regularization term, which prevents overconfidence of the deep neural network by encouraging lower confidence. For instance, in Eq.~\ref{eq:ls}, the second term aims to push the predicted distribution closer to the uniform distribution, while in Eq.~\ref{eq:focal}, it aims to maximize the entropy of the predicted distribution. In this way, the model is able to reject classification when the test samples cannot be accurately classified.

However, the above objectives faces a conflict, i.e., minimizing the classification loss by \textit{increasing the confidence} corresponding to the ground-truth label and maximizing the entropy of the predictive probability by \textit{decreasing the confidence}. To increase and decrease confidence are two opposite goals and striking the appropriate balance between them is difficult, leading to potential problems. For example, the model obtained with weak-regularization may classify all samples with high confidence, leading to overconfident especially on the challenging samples. Besides, strong regularization may try to reduce confidence even on the easy samples, which could lead to an underconfident model especially on the easy samples, even harming the overall model performance \cite{wang2021rethinking}. The key reason for these problems with previous methods is that they apply the same regularization to all samples without considering the inherent difficulty in correctly classifying each training sample. 

\subsection{Dynamic Regularization}
In this section, we propose a novel approach to overcome the limitations observed in existing regularization methods. At a high level, our method aims to learn samples that can be classified with high confidence when within the competency of model. Meanwhile, it is able to reject classifications for samples that fall outside the model's capability according to the predictive confidence. To achieve these goals, we first propose to model the data distribution of simple samples within the capability of the model and difficult samples outside the capability. Then we leverage the inherent difficult samples (e.g., outliers) in the training set to provide clearer guidance on how the model should adjust its predictive confidence on each sample with probabilistic modeling. Intuitively, we recognize that strong regularization is not required for easy samples, which should be classified correctly with higher confidence. Conversely, for challenging samples beyond the model capability, a lower confidence score should be assigned. By applying the above dynamic regularization, the problem of balancing conflicting objectives in the previous methods can be solved gracefully.

To identify what should be learned in the training set, we first make the following assumption that the training set distribution $P$ follows the Huber's $\eta$-contamination model \cite{huber1992robust}, i.e.,
\begin{equation}\label{eq:huber}
P = (1-\eta)P_{in} + \eta P_{out},
\end{equation}
where $P_{in}$ represents the data distribution that falls within the capability of the model, e.g., in-distributional data. $P_{out}$ denotes the data distribution that is outside the capability of the model, e.g., arbitrary outlier distribution. $\eta$ represents the fraction of the challenging data in the training dataset. With a fixed model and dataset, $\eta$ is a latent unknown parameter and can be treated as an adjustable hyperparameter in practice. In practical scenarios, training data often contains a mix of simple and challenging samples, making the aforementioned assumption about the training distribution reasonable. Specifically, these simple samples (e.g., in-distribution data) are more likely to be classified correctly, while some difficult samples may not be classified correctly owing to the lack of crucial features or inherent ambiguity and indistinguishability \cite{seedat2022data,lorena2019complex}. Empirical studies have also shown that even widely used datasets like ImageNet, CIFAR10, and CIFAR100 contain many easily classifiable samples as well as outlier samples that pose challenges for classification \cite{vasudevan2022does,toneva2018empirical}. Besides, the data augmentation methods, a widely-used component in the training of deep neural network, create novel samples by perturbing the training data, potentially introducing outlier samples \cite{yun2019cutmix,cubuk2020randaugment}. Considering the existence of challenging samples  in the training data, we expect the model to be able to accurately classify samples within its capability while having robust calibration performance for samples beyond the capability of the model. 
To this end, we propose the following dynamic regularization for robust confidence calibration.
\definecolor{mygreen}{RGB}{0,128,128}
\begin{algorithm*}[!htbp]
    \caption{The training pseudocode of the proposed method \drc.\label{alg:train}}
    \KwIn{
        Training dataset $\mathcal{D}$, hyperparameter of challenging data fraction $\eta$ and regularization strength $\beta$\;
    }
    \KwOut{
        The trained neural network $f_{\boldsymbol{\theta}}$;
    }
    \For{each iteration}{
     \textcolor{mygreen}{// Randomly sample $B$ training samples from training set.}\\
    Sample $B$ training samples $\{\boldsymbol{x}_i, \boldsymbol{y}_i\}_{i\in\mathbb{B}}$ from the training set $\mathcal{D}$ with a random index set $\mathbb{B}$;\\ 
     \textcolor{mygreen}{// Compute the classification loss to estimate the difficulty of classifying each sample.}\\    
    Compute the corresponding classification loss (e.g., $\mathcal{L}_{ce}(f_{\boldsymbol{\theta}}(\boldsymbol{x}_i),\boldsymbol{y}_i)$) of each sample;\\
     \textcolor{mygreen}{// Sort the samples by the difficulty of classifying them to identify challenging samples.}\\
    Sort the losses and set $\delta_i$ for each sample according to the sorting result;\\
    \textcolor{mygreen}{// Apply dynamic regularization with the implementation of the proposed method (Eq.~\ref{eq:loss}).}\\
    Compute the loss $\widetilde{\mathcal{L}}_{\drc}(f_{\boldsymbol{\theta}}(\boldsymbol{x}_i),\boldsymbol{y}_i)$ according to Eq.~\ref{eq:loss} for each sample;\\
    Update $\boldsymbol{\theta}$ by one step to minimize $\mathbb{E}_{i\in\mathbb{B}}[\widetilde{\mathcal{L}}_{\drc}(f_{\boldsymbol{\theta}}(\boldsymbol{x}_i),\boldsymbol{y}_i)]$ with some gradient method.
    }
\end{algorithm*}

Unlike previous methods that regularize all samples indiscriminately, we impose dynamic regularization on the samples by considering the existence of challenging samples to provide explicit confidence supervision. Specifically, we first introduce the probability $P_{\boldsymbol{x}\sim P_{in}}$ and $P_{\boldsymbol{x}\sim P_{out}}$ that a training sample $\boldsymbol{x}$ belongs to the $P_{in}$  or $P_{out}$, where $P_{\boldsymbol{x}\sim P_{out}}+P_{\boldsymbol{x}\sim P_{in}}=1$. Then we formally define the following probabilistical dynamic regularization objective function $\mathcal{L}_{\drc}$:
\begin{equation}
\begin{aligned}
\label{eq:orc}
\mathcal{L}_{\drc}&(f_{\boldsymbol{\theta}}(\boldsymbol{x}),\boldsymbol{y}) =
\\& P_{\boldsymbol{x}\sim P_{in}} \mathcal{L}_{in}(f_{\boldsymbol{\theta}}(\boldsymbol{x}),\boldsymbol{y}) + P_{\boldsymbol{x}\sim P_{out}} \mathcal{L}_{out}(f_{\boldsymbol{\theta}}(\boldsymbol{x}),\boldsymbol{y}),
\end{aligned}
\end{equation}
where $\mathcal{L}_{in}$ represents the classification loss for the samples within the capability of model, and in practice we can directly utilize the cross-entropy loss or square loss. $\mathcal{L}_{out}$ is the loss function for samples beyond the capability of the model, acting as a regularizer to encourage lower confidence in these samples. Eq.~\ref{eq:orc} has an intuitive motivation that model should admit some samples are within its ability and should be classified accurately with high confidence, while others are outside its ability. Specifically, for samples with a high probability of being in-distribution data (larger $P_{\boldsymbol{x}\sim P_{in}}$), we should minimize its classification loss to increase the confidence. Meanwhile, for samples with a high probability of out of the model capability (larger $P_{\boldsymbol{x}\sim P_{out}}$), we should regularize its prediction confidence to avoid making overconfident decisions. By applying the dynamic regularization, we can achieve confident predictions on samples that the model should classify accurately, while avoiding overconfidence on challenging samples.

However directly estimating the probability $P_{\boldsymbol{x}\sim P_{in}}$ and $P_{\boldsymbol{x}\sim P_{out}}$ presented in Eq.~\ref{eq:orc} is intractable since which distribution the training example is from is unknown. Therefore, we first present a simplified implementation of Eq.~\ref{eq:orc} and then show that this simplified version implicitly estimates $P_{\boldsymbol{x}\sim P_{in}}$ and $P_{\boldsymbol{x}\sim P_{out}}$ to achieve the same effect as the original objective when considering the whole training process. Specifically, in practice, given $B$ training examples during each training step, the simplified dynamic regularization calibration loss can be formulated as follows:
\begin{equation}
\begin{aligned}
\label{eq:loss}
\widetilde{\mathcal{L}}_{\drc}&(f_{\boldsymbol{\theta}}(\boldsymbol{x}_i),\boldsymbol{y}_i) = 
\\&\delta_{i} \mathcal{L}_{in}(f_{\boldsymbol{\theta}}(\boldsymbol{x}_i),\boldsymbol{y}_i) + (1-\delta_{i}) \mathcal{L}_{out}(f_{\boldsymbol{\theta}}(\boldsymbol{x}_i), \boldsymbol{y}_{i}).
\end{aligned}
\end{equation}
$\delta_i$ can be seen as whether the sample is within the capability of the model and it is set to $0$ if classification loss of $\boldsymbol{x}_i$ ranks in the top-$\eta B$ in a batch of $B$ samples, otherwise it is set to $1$ , where $\eta$ is a proportional hyperparameter about how many samples in the training set are within the capabilities of the model. 
Eq.~\ref{eq:loss} simplifies the concept presented in Eq.~\ref{eq:orc} by performing non-parametric binary classification to avoid directly estimating the intractable $P_{\boldsymbol{x}\sim P_{in}}$ and $P_{\boldsymbol{x}\sim P_{out}}$. But when considering the whole training process, $P_{\boldsymbol{x}\sim P_{in}}$ and $P_{\boldsymbol{x}\sim P_{out}}$ can be implicitly estimated and Eq.~\ref{eq:loss} can have the similar objective to Eq.~\ref{eq:orc}. Specifically, if sample $\boldsymbol{x}_i$ is sampled $S$ times during the whole training process, and at the $s$-th sampling $\boldsymbol{x}_i$ is categorized as sample out of the capability of the model ($\delta_i^s=0$) or sample within the capability of the model ($\delta_i^s=1$). Then after $S$ times samplings, the whole objective for $\boldsymbol{x}_i$ can be written as 
\begin{equation}
\begin{aligned}
\label{eq:Sloss}
\begin{matrix}
\sum_{s=1}^S
\end{matrix}[\delta_i^s \mathcal{L}_{in}(f_{\boldsymbol{\theta}}(\boldsymbol{x}_i),\boldsymbol{y}_i) + (1-\delta_i^s) \mathcal{L}_{out}(f_{\boldsymbol{\theta}}(\boldsymbol{x}_i),\boldsymbol{y}_{i})].
\end{aligned}
\end{equation}
Then $P_{\boldsymbol{x}\sim P_{in}}$ and $P_{\boldsymbol{x}\sim P_{out}}$ are implicitly estimated as $\sum_{s=1}^S \delta_i^s/S$ and $1-\sum_{s=1}^S \delta_i^s/S$ with multiple samplings respectively. 
In practice, $\mathcal{L}_{in}$ can be regarded as a classification loss, and we can employ standard cross-entropy loss $\mathcal{L}_{ce}$ or mean square error loss, etc. In this paper we directly employ cross-entropy loss $\mathcal{L}_{ce}(f_{\boldsymbol{\theta}}(\boldsymbol{x}_i),\boldsymbol{y}_i)$ serving as loss function of potential in-distribution data. $\mathcal{L}_{out}$ can be regarded as a regularization loss and we use the weighted KL divergence $\beta \mathcal{L}_{kl}(f_{\boldsymbol{\theta}}(\boldsymbol{x}_i), \boldsymbol{p}_{u})$ between the predicted probability $f_{\boldsymbol{\theta}}(\boldsymbol{x}_i)$ and uniform distribution $\boldsymbol{p}_{u}$ , where $\beta$ is a hyperparameter to tune the strength of the KL-divergence. 
The whole training pseudocode of the \drc is shown in Alg.~\ref{alg:train}.

\subsection{Theory}
In this section, we aim to characterize the calibration error of \drc and the previous regularization-based methods under the Huber’s $\eta$-contamination model (Eq.~\ref{eq:huber}). The results demonstrate that \drc could obtain smaller calibration error.

\textbf{Data generative model.} We consider the Huber’s $\eta$-contamination model described in Eq.~\ref{eq:huber}. Specifically, we assume $P_{in}$ follows a Gaussian mixture model for binary classification, where $X$ is standard Gaussian, $Y\in\{-1,1\}$ with prior probability $\Prob(Y=1)=\Prob(Y=-1)$, and
$$
X\mid Y\sim N(Y\cdot \boldsymbol{w}^*,I_d),
$$
where $I_d$ denotes the $d$-dimensional identity matrix, $\boldsymbol{w}^*$ is the ground-truth coefficient vector. We further assume $P_{out}$ follows the opposite distribution where $X\mid Y\sim N(-Y\cdot \boldsymbol{w}^*,I_d)$. We have $i.i.d.$ observations $\{(\boldsymbol{x}_i,{y}_i)\}_{i=1}^n$ sampled from the distribution $P = (1-\eta)P_{in} + \eta P_{out}$. 

\textbf{The baseline estimator.} For simplicity, we consider the method of minimizing the common used classification loss as shown in the Eq.~\ref{eq:clsloss} with label smoothing as our baseline, which produces a solution:
\begin{equation}
\label{eq:clsloss}
\hat{\boldsymbol{w}}=\arg\min_{\boldsymbol{w}}\frac{1}{n}\sum_{i=1}^n(\boldsymbol{w}^\top \boldsymbol{x}_i-\tilde y_i)^2,
\end{equation}
where $\tilde y_i=(1-\epsilon)y_i+\epsilon/2$. 
For $k\in \{-1,1\}$, the confidence $\hat{\mathbb{P}}_k(\boldsymbol{x})$ is an estimator of $\hat{\mathbb{P}}(y=k|\boldsymbol{x})$, and it takes the form $\label{eq:score} \hat{\mathbb{P}}_k(\boldsymbol{x})={1}/(e^{-k\cdot\hat{\boldsymbol{w}}^\top \boldsymbol{x}}+1)$. Other regularization-based methods essentially have similar goals to label smoothing.

\textbf{Calibration error.} Here we consider the case where $\Prob_1(X)>1/2$, as the case where $\Prob_1(X)\le 1/2$ can be analyzed similarly by symmetry. For $p\in(1/2,1)$, the signed calibration error at a confidence level $p$ is $
p-\mathbb{P}(Y=1\mid \hat{\mathbb{P}}_1(X)=p).
$
As a result, the formula of calibration error (ECE) is given by 
$$\textsl{\textrm{ECE}}[\hat{\mathbb{P}}] = \mathbb{E}[|\mathbb{P}(Y | \hat{\mathbb{P}}(X) = p) - p|].
    $$
In the following, we show that the calibration error of our proposed Algorithm~\ref{alg:train}, denoted by $\hat{\mathbb{P}}_{\drc}$, is smaller than the baseline algorithm $\hat{\mathbb{P}}_{baseline}$. 

\begin{theorem}\label{thm:calibration}
Consider the data generative model and the learning setting above.  We assume $\|\boldsymbol{w}^*\|\le c_0$ for some sufficiently small $c_0>0$, and $d/n=o(1)$. Suppose the initialization parameter $\boldsymbol{\theta}^{(0)}$ satisfies $\|\boldsymbol{\theta}^{(0)}-\boldsymbol{w}^*\|\le c_1$ for a sufficiently small constant $c_1>0$. Then, for sufficiently large $n$, for $k=2,\ldots,K$, we have
$$
\textrm{ECE}[\hat{\mathbb{P}}_{\drc}]< \textrm{ECE}[\hat{\mathbb{P}}_{baseline}].
$$
\end{theorem}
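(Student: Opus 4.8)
# Proof Proposal for Theorem \ref{thm:calibration}

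\textbf{Overall strategy.} The plan is to reduce both algorithms to explicit closed-form linear estimators $\hat{\boldsymbol{w}}$, compute the resulting signed calibration error as a function of the angle/magnitude mismatch between $\hat{\boldsymbol{w}}$ and $\boldsymbol{w}^*$, and then show that \drc's effective estimator is strictly closer to $\boldsymbol{w}^*$ in the relevant sense. The key observation that makes the problem tractable is that, under the Gaussian mixture model with $X\mid Y\sim N(Y\boldsymbol{w}^*,I_d)$ for $P_{in}$ and $X\mid Y\sim N(-Y\boldsymbol{w}^*,I_d)$ for $P_{out}$, the squared-loss estimator in Eq.~\ref{eq:clsloss} is just ordinary least squares, so $\hat{\boldsymbol{w}}\to \mathbb{E}[\tilde Y X]$ in probability as $n\to\infty$ with $d/n=o(1)$. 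For the baseline, $\mathbb{E}[YX] = (1-\eta)\boldsymbol{w}^* + \eta(-\boldsymbol{w}^*) = (1-2\eta)\boldsymbol{w}^*$, scaled by $(1-\epsilon)$ from label smoothing; the contamination therefore shrinks (and, if $\eta>1/2$, flips) the coefficient. For \drc, the dynamic gate $\delta_i$ removes the top-$\eta B$ highest-loss samples each round, which — under the stated small-$\|\boldsymbol{w}^*\|$ and good-initialization conditions — preferentially removes the $P_{out}$ samples (they have systematically larger residuals against a $\boldsymbol{w}$ pointing near $\boldsymbol{w}^*$), so the \drc estimator concentrates near a positive multiple of $\boldsymbol{w}^*$ with the out-component largely purged.

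\textbf{Step-by-step plan.} First I would establish the population limits: show $\|\hat{\boldsymbol{w}}_{baseline} - (1-\epsilon)(1-2\eta)\boldsymbol{w}^*\| = o_P(1)$ via standard OLS concentration (using $d/n=o(1)$ and sub-Gaussianity of the design). Second, I would analyze one step of Alg.~\ref{alg:train} from a parameter $\boldsymbol{\theta}$ with $\|\boldsymbol{\theta}-\boldsymbol{w}^*\|\le c_1$: conditioned on such $\boldsymbol{\theta}$, the per-sample loss $\mathcal{L}_{ce}(f_{\boldsymbol{\theta}}(\boldsymbol{x}_i),\boldsymbol{y}_i)$ is stochastically larger for $P_{out}$ points than for $P_{in}$ points because the margin $y_i\boldsymbol{\theta}^\top\boldsymbol{x}_i$ has mean $+\|\boldsymbol{w}^*\|^2$-ish for in-samples and $-\|\boldsymbol{w}^*\|^2$-ish for out-samples; hence thresholding at the empirical $\eta$-quantile selects the $P_{out}$ points with probability tending to one (here the smallness of $c_0$ keeps the Gaussian overlap controlled so the quantile separation is clean). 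Third, I would show that the KL-to-uniform regularizer applied to the gated-out samples does not bias the retained-sample fit, so the \drc update behaves like OLS on (essentially) the clean $P_{in}$ subsample, giving $\hat{\boldsymbol{w}}_{\drc}\to (1-\epsilon)\boldsymbol{w}^* \cdot (1-o(1))$ — a strictly larger positive multiple of $\boldsymbol{w}^*$ than the baseline's $(1-\epsilon)(1-2\eta)$. Fourth, I would plug both limits into the calibration-error formula: writing $\hat{\boldsymbol{w}} = a\,\boldsymbol{w}^*$ for scalar $a>0$ (exploiting that both limits are colinear with $\boldsymbol{w}^*$, so only the magnitude matters), compute $\mathbb{P}(Y=1\mid \hat{\mathbb{P}}_1(X)=p)$ in closed form as a logistic/Gaussian expression in $a$ and $\|\boldsymbol{w}^*\|$, and verify that $|\,p - \mathbb{P}(Y=1\mid\hat{\mathbb{P}}_1(X)=p)\,|$ is monotone decreasing in $a$ on the relevant range up to the well-specified value (where it vanishes); since $a_{\drc} > a_{baseline}$ and both are below the perfectly-calibrated value when $c_0$ is small, the \drc error is strictly smaller pointwise in $p$, hence in ECE after integrating.

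\textbf{Main obstacle.} The hard part will be Step three: rigorously controlling the dynamics of Alg.~\ref{alg:train}, i.e., showing that the trajectory $\boldsymbol{\theta}^{(t)}$ stays in the good region $\{\|\boldsymbol{\theta}-\boldsymbol{w}^*\|\le c_1\}$ throughout training so that the gating keeps correctly identifying $P_{out}$ points, and that the stochastic (mini-batch) top-$\eta B$ selection does not accumulate errors over iterations. One needs a self-consistent argument: good initialization $\Rightarrow$ correct gating at step $0$ $\Rightarrow$ the gradient step moves $\boldsymbol{\theta}$ toward $\boldsymbol{w}^*$ (a one-step contraction for the gated least-squares objective, using $d/n=o(1)$ and small $c_0$ to ensure the restricted design is well-conditioned) $\Rightarrow$ $\boldsymbol{\theta}^{(1)}$ still in the good region, and induct. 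Handling the finite batch size $B$ cleanly — so that the empirical $\eta$-quantile of losses concentrates around the population quantile uniformly over the iterations — is the delicate quantitative piece; a union bound over iterations together with a quantile concentration inequality should suffice, but it is where most of the technical care goes. The remaining steps are comparatively routine Gaussian calculations and a single-variable monotonicity check.
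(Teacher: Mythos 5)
Your proposal follows essentially the same route as the paper's proof: derive the population limit $(1-\epsilon)(1-2\eta)\boldsymbol{w}^*$ for the baseline least-squares estimator, argue that under the good-initialization and small-$\|\boldsymbol{w}^*\|$ conditions the loss-ranking gate purges the $P_{out}$ samples so that the \drc estimator converges to a strictly larger positive multiple of $\boldsymbol{w}^*$, and conclude via the monotonicity of $|p-\sigma(c\,\sigma^{-1}(p))|$ in the scaling factor $c\ge 1$ followed by integration over $p$. The only differences are minor — the paper records the \drc limit as $(1-\eta)\boldsymbol{w}^*$ rather than your $(1-\epsilon)\boldsymbol{w}^*(1-o(1))$, which does not affect the comparison, and your plan actually spells out the gating-dynamics step that the paper's proof simply asserts (``there will be only $o(1)$ outliers left'').
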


\begin{proof} Proof of Theorem~\ref{thm:calibration}. Following \cite{bai2021don}, we have 
\begin{equation}
\begin{aligned}
p-&\Prob(Y=1\mid \hat{\mathbb{P}}_1(X)=p)=\\&
p-\E_Z[\sigma(\frac{\|\w^*\|}{\|\hat\w\|}\cos\hat\theta\cdot\sigma^{-1}(p))+\sin\hat\theta\cdot\|\w^*\|Z],   
\end{aligned}
\end{equation}
where $\cos\hat\theta=\frac{\hat\w^\top\w^*}{\|\hat\w\|\cdot\|\w^*\|}.$

We first compute the calibration error for the baseline method $$
\hat{\boldsymbol{w}}=\arg\min_{\boldsymbol{w}}\frac{1}{n}\sum_{i=1}^n(\boldsymbol{w}^\top \boldsymbol{x}_i-\tilde y_i)^2,
$$
where $\tilde y_i=(1-\epsilon)y_i+\epsilon/2$. 
As ${d/n}= o(1)$, we have 
\begin{equation}
\begin{aligned}
\hat\w= (\E[x_i&x_i^\top])^{-1} \E[x_i \tilde y_i]+o(1) \\&
=\frac{1-\epsilon}{1+\|\w^*\|^2} \E[x_i  y_i]+o(1)\\&
=(1-\epsilon)(1-2\eta)\cdot\w^*+o(1).
\end{aligned}
\end{equation}
In the above derivation, the first equation uses Sherman–Morrison formula.

As a result, we have $\cos\hat\theta= 1-o(1)$ as $n$ grows, and therefore when $n\to\infty$,
\begin{equation}
\begin{aligned}
p-\Prob(Y=1\mid \hat{\mathbb{P}}&_{baseline}(X)=p)=\\&
p-\sigma(\frac{1}{(1-\epsilon)(1-2\eta)}\sigma^{-1}(p)).
\end{aligned}
\end{equation}
Now, for the \drc, when the initialization parameter $\boldsymbol{\theta}^{(0)}$ satisfies $\|\boldsymbol{\theta}^{(0)}-\boldsymbol{w}^*\|\le c_1$ for a sufficiently small constant $c_1>0$, there will be only $o(1)$ outliers left, and therefore
\begin{equation}
\begin{aligned}
p-\Prob(Y=1\mid \hat{\mathbb{P}}_{\drc}(X)=p)=p-\sigma(\frac{1}{1-\eta}\sigma^{-1}(p)).
\end{aligned}
\end{equation}
By the monotonicity of $\sigma$ and the nonnegativity of $\sigma^{-1}(p)$ when $p>1/2$. We have 
\begin{equation}
\begin{aligned}
|p-\Prob&(Y=1\mid \hat{\mathbb{P}}_{\drc}(X)=p)|\\&<|p-\Prob(Y=1\mid \hat{\mathbb{P}}_{baseline}(X)=p)|.    
\end{aligned}
\end{equation}
Taking the expectation of $p$ for both sides, we have 
$$
\textsl{\textrm{ECE}}[\hat{\mathbb{P}}_{\drc}]< \textsl{\textrm{ECE}}[\hat{\mathbb{P}}_{baseline}].
$$
\end{proof}

\section{Experiments}
We conduct extensive experiments on multiple datasets with challenging examples in the training or test dataset to answer the following questions. Q1 Effectiveness: Does \drc outperform other methods in terms of accuracy?  Q2 Reliability: Can \drc obtain a more reliable model? Q3 Robustness: How does \drc perform on  data that may outside the capability of the model? Q4 Ablation study: How would the performance be if challenging samples in the training dataset are not exploited? Q5 Hyperparameter analysis: How do the hyperparameters in \drc affect model performance? 
\subsection{Experimental Setup}
In this section, we describe the experimental setup including the used experimental datasets, evaluation metrics, experimental settings and comparison methods. 

\newcommand{\mycustomsize}{\fontsize{8.0}{8.3}\selectfont}
\begin{table*}[!htbp]
\mycustomsize
  \centering
  \caption{The comparison experimental results on different datasets and different methods under the standard setting. ↓ and ↑ indicate lower and higher values are better respectively. For better presentation, the best and second-best results are in {\textbf{bold}} and \underline{uderline} respectively. For clarity, NLL values are multiplied by 10. Remaining values are reported as percentages (\%). For datasets with both in-distributional (ID) easy samples and hard test samples, we report the results on all samples and hard samples respectively. We mark whether the test samples are sampled from ID or hard data in the table. \textit{Compared with other methods, \drc achieves excellent performance on different metrics in almost all datasets.}}
    \begin{tabular}{cc|c|cccc|ccc}
    \toprule
    \multirow{2}[2]{*}{\makecell{Dataset}} & \multicolumn{1}{c|}{\multirow{2}[2]{*}{Method}} & \multicolumn{1}{c|}{ACC} & \multicolumn{1}{c}{EAURC} & \multicolumn{1}{c}{AURC} & \multicolumn{1}{c}{AUPR} & \multicolumn{1}{c|}{FPR95\%}  & \multicolumn{1}{c}{ECE} & \multicolumn{1}{c}{Brier} & \multicolumn{1}{c}{NLL} \\
    & \multicolumn{1}{c|}{} & \multicolumn{1}{c|}{(↑)} & (↓)   & \multicolumn{1}{c}{(↓)} & Err (↑) & \multicolumn{1}{c|}{TPR(↓)} & (↓)   & (↓)   & (↓) \\
    \midrule
    \multicolumn{1}{c}{\multirow{9}[0]{*}{\makecell{CIFAR\\-8-2\\(All)}}} & ERM   & 77.86±0.19 & 1.84±0.66 & 4.50±0.71 & 74.27±5.70 & 41.81±14.15 & 14.17±5.27 & 35.22±5.55 & 16.06±6.34 \\
          & PC    & 77.36±0.14 & 2.67±1.08 & 5.45±1.09 & 71.12±6.43 & 48.10±12.39 & 18.06±2.45 & 39.34±3.32 & 23.63±9.21 \\
          & LS    & 77.93±0.23 & 5.01±2.95 & 7.65±2.99 & 74.82±6.05 & 34.36±9.55 & 8.37±1.02 & 30.08±2.61 & \underline{7.66±0.65} \\
          & FLSD  & 76.79±0.12 & 2.96±0.16 & 5.89±0.18 & 68.88±1.13 & 56.54±1.52 & 11.63±0.07 & 35.42±0.26 & 11.87±0.15 \\
          & FL    & 77.26±0.25 & 2.37±0.05 & 5.18±0.12 & 70.58±0.18 & 52.30±0.88 & 16.31±0.13 & 37.65±0.30 & 15.38±0.21 \\
          & IFL   & \underline{78.30±0.18} & \underline{1.02±0.14} & \underline{3.57±0.10} & \underline{82.98±1.27} & {\textbf{18.80±2.05}} & \underline{5.64±1.04} & \underline{26.80±0.76} & 7.92±0.88 \\
          & DFL   & 77.14±0.16 & 2.35±0.07 & 5.19±0.05 & 69.92±1.06 & 53.53±1.44 & 16.00±0.09 & 37.65±0.24 & 15.48±0.07 \\
          & RC    & 77.57±0.31 & 4.10±0.44 & 6.83±0.40 & 65.33±0.89 & 57.23±0.75 & 18.52±0.47 & 39.97±0.65 & 18.55±0.60 \\
          \rowcolor{gray!25} & \drc  & {\textbf{78.39±0.02}} & {\textbf{0.90±0.04}} & {\textbf{3.42±0.03}} & {\textbf{83.76±0.49}} & \underline{18.85±1.71} & {\textbf{3.82±0.24}} & {\textbf{25.86±0.21}} & {\textbf{6.88±0.07}} \\

    \midrule
    \multicolumn{1}{c}{\multirow{9}[0]{*}{\makecell{CIFAR\\-80-20\\(All)}}} & ERM   & 60.77±0.37 & 4.60±0.18 & 13.56±0.34 & 81.95±0.18 & 56.81±0.30 & 23.79±0.39 & 60.44±0.72 & 32.04±0.66 \\
          & PC    & \underline{62.10±0.28} & 4.44±0.05 & 12.75±0.14 & 81.40±0.31 & 55.31±1.48 & 23.77±0.36 & 59.27±0.53 & 29.61±0.47 \\
          & LS    & 62.08±0.43 & 3.90±0.14 & 12.22±0.33 & 85.06±0.37 & 46.26±0.72 & {\textbf{8.31±0.81}} & \underline{48.99±0.67} & {\textbf{17.79±0.26}} \\
          & FLSD  & 59.44±0.36 & 5.04±0.14 & 14.68±0.32 & 82.15±0.42 & 55.51±0.96 & 12.92±0.35 & 54.81±0.49 & 23.31±0.17 \\
          & FL    & 60.30±0.12 & 4.73±0.18 & 13.92±0.21 & 81.40±0.17 & 55.35±0.95 & 18.30±0.12 & 56.81±0.26 & 26.06±0.28 \\
          & IFL   & 61.82±0.46 & \underline{3.62±0.03} & \underline{12.07±0.25} & \underline{86.66±0.31} & \underline{44.96±0.63} & 20.80±0.49 & 56.18±0.62 & 27.76±0.25 \\
          & DFL   & 59.94±0.21 & 4.96±0.04 & 14.34±0.14 & 81.59±0.37 & 55.03±0.50 & 17.24±0.22 & 56.71±0.40 & 26.22±0.15 \\
          & RC    & 60.78±0.46 & 4.96±0.10 & 13.92±0.19 & 80.99±0.35 & 58.50±0.65 & 24.25±0.85 & 61.08±0.75 & 30.90±2.45 \\
          \rowcolor{gray!25} & \drc  & {\textbf{62.74±0.44}} & {\textbf{3.19±0.07}} & {\textbf{11.21±0.24}} & {\textbf{87.41±0.21}} & {\textbf{40.02±0.92}} & \underline{9.53±0.41} & {\textbf{48.76±0.71}} & \underline{19.31±0.13} \\
    \midrule
    \multicolumn{1}{c}{\multirow{9}[0]{*}{\makecell{Image\\NetBG\\(All)}}} & ERM   & 85.79±0.11 & \underline{1.23±0.01} & 2.29±0.01 & 62.29±0.37 & 47.97±0.50 & 4.79±0.41 & 20.30±0.20 & 4.71±0.08 \\
          & PC    & 85.57±0.15 & 1.33±0.07 & 2.43±0.08 & 62.42±1.70 & 49.01±3.92 & 8.62±0.46 & 22.45±0.71 & 6.30±0.44 \\
          & LS    & {\textbf{86.62±0.12}} & 1.34±0.03 & \underline{2.27±0.05} & 62.91±0.28 & \underline{43.95±0.46} & 10.01±0.30 & 19.84±0.09 & 4.92±0.03 \\
          & FLSD  & 85.36±0.16 & 1.43±0.05 & 2.56±0.03 & 61.70±1.63 & 49.68±2.06 & 6.30±0.08 & 21.00±0.04 & 4.81±0.01 \\
          & FL    & 85.67±0.11 & 1.29±0.02 & 2.37±0.03 & 62.62±0.19 & 48.03±0.25 & \underline{1.50±0.11} & \underline{19.76±0.12} & \underline{4.42±0.03} \\
          & IFL   & 85.47±0.05 & 1.26±0.03 & 2.38±0.03 & \underline{62.99±0.48} & 48.07±0.74 & 6.67±0.32 & 21.32±0.11 & 5.21±0.07 \\
          & DFL   & 85.66±0.25 & 1.42±0.07 & 2.51±0.11 & 61.69±0.49 & 48.87±2.11 & 1.62±0.07 & 19.94±0.38 & 4.50±0.09 \\
          & RC    & 85.95±0.18 & 1.70±0.03 & 2.73±0.01 & 58.32±0.63 & 52.66±0.28 & 8.88±0.18 & 22.56±0.29 & 6.44±0.08 \\
         \rowcolor{gray!25} & \drc  & \underline{86.12±0.11} & {\textbf{0.89±0.01}} & {\textbf{1.90±0.02}} & {\textbf{67.74±0.34}} & {\textbf{38.18±0.42}} & {\textbf{1.04±0.17}} & {\textbf{18.71±0.18}} & {\textbf{4.41±0.06}} \\
        \midrule
        \multicolumn{1}{c}{\multirow{9}[0]{*}{\makecell{Image\\NetBG\\(hard set)}}} & ERM   & 81.51±0.15 & \underline{2.09±0.02} & 3.91±0.02 & 62.91±0.36 & 56.87±0.50 & 6.32±0.52 & 26.34±0.30 & 6.12±0.11 \\
          & PC    & 81.24±0.17 & 2.23±0.12 & 4.11±0.14 & 63.03±1.70 & 57.12±2.79 & 11.20±0.57 & 29.14±0.93 & 8.18±0.58 \\
          & LS    & {\textbf{82.60±0.16}} & 2.17±0.04 & \underline{3.78±0.07} & \underline{63.60±0.31} & \underline{51.89±0.65} & 9.87±0.39 & \underline{25.24±0.10} & 6.07±0.03 \\
          & FLSD  & 80.94±0.20 & 2.41±0.07 & 4.36±0.06 & 62.41±1.60 & 57.78±1.66 & 6.32±0.13 & 26.88±0.06 & 6.06±0.02 \\
          & FL    & 81.36±0.13 & 2.18±0.03 & 4.04±0.05 & 63.19±0.16 & 56.30±0.58 & 2.08±0.25 & 25.61±0.15 & \underline{5.71±0.05} \\
          & IFL   & 81.11±0.04 & 2.14±0.05 & 4.05±0.06 & \underline{63.60±0.43} & 57.04±0.74 & 8.69±0.38 & 27.66±0.13 & 6.76±0.08 \\
          & DFL   & 81.34±0.35 & 2.38±0.12 & 4.24±0.19 & 62.33±0.46 & 56.63±1.86 & \underline{1.75±0.36} & 25.79±0.53 & 5.79±0.12 \\
          & RC    & 81.82±0.24 & 2.73±0.03 & 4.49±0.03 & 59.33±0.70 & 59.97±0.51 & 11.52±0.22 & 29.13±0.38 & 8.33±0.10 \\
          \rowcolor{gray!25} & \drc  & \underline{81.97±0.15} & {\textbf{1.51±0.01}} & {\textbf{3.24±0.04}} & {\textbf{68.38±0.33}} & {\textbf{47.09±0.48}} & {\textbf{1.32±0.22}} & {\textbf{24.17±0.23}} & {\textbf{5.68±0.08}} \\

    \midrule
    \multicolumn{1}{c}{\multirow{9}[0]{*}{\makecell{Food\\101\\(ID data)}}} & ERM   & 84.99±0.09 & 1.62±0.01 & 2.80±0.02 & 60.33±0.44 & 52.35±0.68 & 4.82±0.18 & 21.90±0.14 & 5.87±0.03 \\
          & PC    & 85.29±0.16 & 1.63±0.02 & 2.77±0.04 & \underline{59.95±0.32} & 52.59±0.97 & 8.12±0.12 & 22.94±0.29 & 7.15±0.09 \\
          & LS    & 85.04±0.10 & 2.21±0.03 & 3.39±0.04 & 57.59±0.31 & 55.80±0.42 & 10.49±0.07 & 23.35±0.07 & 6.79±0.03 \\
          & FLSD  & 85.00±0.07 & 1.75±0.03 & 2.93±0.03 & 58.00±0.42 & 55.94±1.31 & 3.54±0.04 & 21.82±0.04 & 5.47±0.01 \\
          & FL    & 85.37±0.13 & 1.68±0.02 & 2.81±0.03 & 58.61±0.93 & 53.37±0.80 & \underline{1.32±0.10} & 21.11±0.07 & \underline{5.32±0.01} \\
          & IFL   & \underline{86.50±0.12} & \underline{1.52±0.03} & \underline{2.47±0.05} & 57.81±0.20 & {\textbf{51.32±0.37}} & 7.64±0.16 & 21.29±0.26 & 6.66±0.12 \\
          & DFL   & 85.50±0.08 & 1.63±0.02 & 2.74±0.03 & 58.52±0.27 & 52.95±1.04 & {\textbf{0.80±0.15}} & \underline{20.79±0.10} & {\textbf{5.30±0.01}} \\
          & RC    & 85.42±0.26 & 1.61±0.03 & 2.72±0.07 & 58.67±0.15 & 53.70±0.06 & 6.24±0.33 & 21.93±0.45 & 6.11±0.14 \\
          \rowcolor{gray!25} & \drc  & {\textbf{86.53±0.09}} & {\textbf{1.46±0.02}} & {\textbf{2.41±0.01}} & {\textbf{57.93±0.17}} & \underline{51.61±0.89} & 3.66±0.14 & {\textbf{19.95±0.05}} & 5.44±0.02 \\

    \midrule
    \multicolumn{1}{c}{\multirow{9}[0]{*}{\makecell{Came\\lyon\\(hard set)}}} & ERM   & 85.75±1.32 & 3.41±0.66 & 4.48±0.86 & 39.42±0.39 & 73.35±2.54 & 8.73±1.15 & 22.56±2.32 & 4.41±0.54 \\
          & PC    & 85.16±0.47 & 3.42±0.36 & 4.58±0.43 & 40.42±0.61 & 74.48±1.54 & 11.44±0.43 & 25.41±0.93 & 6.38±0.32 \\
          & LS    & 84.73±1.43 & 4.41±0.71 & 5.65±0.94 & 38.21±0.77 & 75.98±2.02 & 13.21±1.57 & 25.94±1.54 & 4.29±0.18 \\
          & FLSD  & 86.09±0.73 & 3.69±0.11 & 4.71±0.22 & 37.85±1.08 & 73.96±0.76 & 9.12±0.77 & 21.99±0.60 & 3.69±0.08 \\
          & FL    & \underline{86.60±0.77} & 3.36±0.19 & 4.30±0.27 & 38.39±1.64 & 72.46±1.24 & \underline{3.45±0.77} & \underline{19.52±0.78} & \underline{3.23±0.11} \\
          & IFL   & 85.85±0.29 & \underline{2.92±0.16} & \underline{3.97±0.13} & \underline{41.19±1.40} & 72.31±0.57 & 11.21±0.32 & 24.46±0.49 & 6.31±0.07 \\
          & DFL   & 85.75±0.84 & 3.03±0.25 & 4.10±0.36 & {\textbf{41.58±1.13}} & \underline{71.72±1.09} & {\textbf{2.87±0.25}} & 19.74±0.92 & {\textbf{3.19±0.13}} \\
          & RC    & 84.36±1.91 & 4.52±0.64 & 5.83±0.93 & 39.43±2.03 & 75.86±2.24 & 12.34±2.05 & 27.14±3.71 & 7.19±1.32 \\
          \rowcolor{gray!25} & \drc  & {\textbf{87.46±1.56}} & {\textbf{2.83±0.43}} & {\textbf{3.66±0.62}} & 37.96±1.62 & {\textbf{71.18±2.54}} & 6.39±1.35 & {\textbf{19.34±2.38}} & 3.52±0.41 \\

    \bottomrule
    \toprule
    \multirow{2}[2]{*}{Dataset} & \multicolumn{1}{c|}{\multirow{2}[2]{*}{Method}} & \multicolumn{1}{c}{ECE} & \multicolumn{1}{c}{Brier} & \multicolumn{1}{c|}{NLL} & \multirow{2}[2]{*}{Dataset} & \multicolumn{1}{c|}{\multirow{2}[2]{*}{Method}} & \multicolumn{1}{c}{ECE} & \multicolumn{1}{c}{Brier} & \multicolumn{1}{c}{NLL} \\
          & \multicolumn{1}{c|}{} & \multicolumn{1}{c}{(↓)}   & (↓)   & \multicolumn{1}{c|}{(↓)}   &       & \multicolumn{1}{c|}{} & (↓)   & (↓)   & (↓) \\
    \midrule
    \multicolumn{1}{c}{\multirow{9}[0]{*}{\makecell{CIFAR\\-8-2\\(hard set)}}} & ERM   & \multicolumn{1}{c}{55.69±23.39} & 136.37±24.91 & \multicolumn{1}{c|}{67.67±29.01} & \multicolumn{1}{c}{\multirow{9}[0]{*}{\makecell{CIFAR\\-80-20\\(hard set)}}} & ERM   & 64.26±0.39 & 151.27±0.57 & 106.94±1.80 \\
          & PC    & \multicolumn{1}{c}{70.60±10.18} & 151.27±13.78 & \multicolumn{1}{c|}{99.21±42.13} &       & PC    & 64.50±1.19 & 152.05±1.46 & 95.59±2.37 \\
          & LS    & \multicolumn{1}{c}{36.33±11.02} & 110.98±9.95 & \multicolumn{1}{c|}{27.55±2.13} &       & LS    & \underline{29.65±0.92} & \underline{116.05±0.73} & \underline{54.92±0.62} \\
          & FLSD  & \multicolumn{1}{c}{57.10±0.44} & 134.02±0.63 & \multicolumn{1}{c|}{50.15±0.73} &       & FLSD  & 45.59±0.39 & 129.95±0.48 & 76.85±0.56 \\
          & FL    & \multicolumn{1}{c}{67.07±0.65} & 146.57±0.99 & \multicolumn{1}{c|}{66.37±0.74} &       & FL    & 53.16±0.35 & 138.38±0.37 & 86.54±0.65 \\
          & IFL   & \multicolumn{1}{c}{\underline{14.01±4.25}} & \underline{97.20±3.93} & \multicolumn{1}{c|}{\underline{26.11±3.04}} &       & IFL   & 46.29±0.38 & 133.23±0.50 & 79.58±0.86 \\
          & DFL   & \multicolumn{1}{c}{65.82±0.28} & 144.78±0.43 & \multicolumn{1}{c|}{66.31±0.46} &       & DFL   & 52.10±0.26 & 137.08±0.34 & 86.46±0.28 \\
          & RC    & \multicolumn{1}{c}{74.56±1.08} & 156.72±1.46 & \multicolumn{1}{c|}{79.05±2.43} &       & RC    & 65.14±2.38 & 152.38±2.96 & 96.58±8.82 \\
          \rowcolor{gray!25} & \drc  & \multicolumn{1}{c}{{\textbf{10.24±0.44}}} & {\textbf{93.44±0.61}} & \multicolumn{1}{c|}{{\textbf{23.37±0.41}}} &       & \drc  & {\textbf{15.38±0.85}} & {\textbf{109.05±0.49}} & {\textbf{49.97±0.35}}\\
    \bottomrule
    \end{tabular}%
  \label{tab:weak_aug_all}%
\end{table*}%

\begin{table*}[!htbp]
\mycustomsize
  \centering
  \caption{The comparison experimental results on different datasets and different methods under the strong augmentation setting. ↓ and ↑ indicate lower and higher values are better. For better presentation, the best and second-best results are in {\textbf{bold}} and \underline{uderline} respectively. For clarity, NLL values are multiplied by 10. Remaining values are reported as percentages (\%). For datasets with both in-distributional (ID) easy samples and hard test samples, we report the results on all samples and hard samples respectively. We mark whether the test samples are sampled from ID or hard data in the table. \textit{Compared with other methods, \drc achieves excellent performance on different metrics in almost all datasets.}}
    \begin{tabular}{cc|c|cccc|ccc}
    \toprule
    \multirow{2}[2]{*}{\makecell{Dataset}} & \multicolumn{1}{c|}{\multirow{2}[2]{*}{Method}} & \multicolumn{1}{c|}{ACC} & \multicolumn{1}{c}{EAURC} & \multicolumn{1}{c}{AURC} & \multicolumn{1}{c}{AUPR} & \multicolumn{1}{c|}{FPR95\%}  & \multicolumn{1}{c}{ECE} & \multicolumn{1}{c}{Brier} & \multicolumn{1}{c}{NLL} \\
    & \multicolumn{1}{c|}{} & \multicolumn{1}{c|}{(↑)} & (↓)   & \multicolumn{1}{c}{(↓)} & Err (↑) & \multicolumn{1}{c|}{TPR(↓)} & (↓)   & (↓)   & (↓) \\
    \midrule
    \multicolumn{1}{c}{\multirow{9}[0]{*}{\makecell{CIFAR\\-8-2\\(All)}}} & ERM   & 79.35±0.27 & 0.70±0.02 & 2.99±0.07 & 83.77±0.83 & 16.29±1.22 & 6.89±0.78 & 25.86±0.77 & 6.78±0.31 \\
          & PC    & 79.65±0.17 & \underline{0.64±0.03} & \underline{2.87±0.01} & \underline{84.54±1.18} & \underline{12.80±1.53} & \underline{4.94±0.44} & \underline{24.35±0.29} & \underline{6.66±0.20} \\
          & LS    & 79.65±0.10 & 4.57±1.80 & 6.80±1.78 & 75.48±3.08 & 29.96±5.26 & 10.01±0.40 & 28.05±0.89 & 7.62±0.34 \\
          & FLSD  & 78.22±0.18 & 2.33±0.06 & 4.89±0.05 & 70.65±0.83 & 50.43±0.94 & 8.58±0.36 & 32.14±0.14 & 10.54±0.11 \\
          & FL    & 78.96±0.09 & 1.07±0.19 & 3.46±0.20 & 78.62±1.84 & 31.57±6.61 & 9.51±1.84 & 29.34±1.73 & 8.78±1.28 \\
          & IFL   & \underline{79.70±0.09} & 0.70±0.01 & 2.92±0.01 & {\textbf{84.86±0.19}} & 12.85±0.28 & 4.97±0.34 & 24.39±0.22 & 6.82±0.16 \\
          & DFL   & 78.69±0.25 & 1.97±0.13 & 4.42±0.07 & 71.32±1.06 & 48.43±2.15 & 13.25±0.24 & 33.70±0.05 & 12.47±0.38 \\
          & RC    & 75.38±4.42 & 2.03±0.47 & 5.48±1.78 & 76.19±3.88 & 43.03±2.23 & 17.36±1.22 & 40.15±4.98 & 23.41±3.95 \\
          \rowcolor{gray!25} & \drc  & {\textbf{79.97±0.09}} & {\textbf{0.62±0.02}} & {\textbf{2.78±0.03}} & 84.09±0.77 & {\textbf{11.50±0.30}} & {\textbf{3.30±0.32}} & {\textbf{23.14±0.09}} & {\textbf{5.98±0.07}} \\

    \midrule
    \multicolumn{1}{c}{\multirow{9}[0]{*}{\makecell{CIFAR\\-80-20\\(All)}}} & ERM   & 63.68±0.47 & 3.42±0.51 & 11.01±0.70 & 84.17±2.74 & 47.71±7.28 & 17.08±4.87 & 52.00±4.53 & 25.22±6.82 \\
          & PC    & 64.10±0.52 & \underline{3.02±0.18} & 10.42±0.40 & \underline{86.81±0.14} & \underline{40.59±0.98} & 16.09±0.34 & 50.17±0.79 & 21.96±0.67 \\
          & LS    & 63.73±0.61 & 4.17±0.90 & 11.73±0.63 & 83.69±3.16 & 46.24±4.89 & {\textbf{5.07±2.14}} & \underline{47.08±0.38} & \underline{18.16±0.75} \\
          & FLSD  & 61.88±0.35 & 5.02±0.03 & 13.44±0.16 & 80.19±0.35 & 55.50±0.33 & 16.49±0.06 & 54.23±0.36 & 26.09±0.42 \\
          & FL    & 63.06±0.45 & 3.30±0.07 & 11.16±0.28 & 86.46±0.22 & 41.15±0.98 & 9.29±0.30 & 47.72±0.46 & 18.56±0.30 \\
          & IFL   & \underline{64.43±0.62} & 3.02±0.09 & \underline{10.27±0.33} & 86.66±0.48 & 41.42±1.10 & 21.66±1.07 & 54.25±1.46 & 29.84±1.70 \\
          & DFL   & 63.47±0.48 & 3.31±0.10 & 10.99±0.31 & 85.89±0.33 & 41.43±0.72 & \underline{8.42±1.17} & \underline{47.08±0.52} & \underline{18.16±0.35} \\
          & RC    & 60.20±1.04 & 4.97±0.30 & 14.23±0.81 & 81.19±0.12 & 58.94±0.96 & 24.73±2.69 & 62.20±1.83 & 42.31±3.34 \\
          \rowcolor{gray!25} & \drc  & {\textbf{65.90±0.33}} & {\textbf{2.37±0.11}} & {\textbf{8.98±0.24}} & {\textbf{88.38±0.10}} & {\textbf{34.76±0.29}} & 9.84±0.40 & {\textbf{44.59±0.44}} & {\textbf{17.20±0.19}} \\

    \midrule
    \multicolumn{1}{c}{\multirow{9}[0]{*}{\makecell{Image\\NetBG\\(All)}}} & ERM   & 87.35±0.14 & \underline{0.98±0.02} & \underline{1.82±0.04} & 61.85±1.00 & \underline{44.06±1.25} & 3.47±0.10 & 17.79±0.30 & \underline{4.04±0.07} \\
          & PC    & 86.88±0.36 & 1.09±0.06 & 1.99±0.11 & 61.90±0.71 & 45.12±2.37 & 7.19±0.46 & 19.95±0.89 & 5.21±0.35 \\
          & LS    & {\textbf{87.54±0.39}} & 1.26±0.02 & 2.07±0.07 & 60.55±0.61 & 44.85±0.89 & 4.42±0.27 & \underline{17.73±0.47} & 4.20±0.12 \\
          & FLSD  & 86.64±0.34 & 1.21±0.04 & 2.15±0.09 & 61.42±0.94 & 47.22±0.32 & 7.51±0.38 & 19.56±0.33 & 4.50±0.07 \\
          & FL    & 86.70±0.20 & 1.17±0.07 & 2.10±0.08 & 62.08±1.79 & 46.23±2.16 & 3.92±0.13 & 18.65±0.37 & 4.21±0.09 \\
          & IFL   & 87.26±0.24 & 1.03±0.04 & 1.88±0.06 & \underline{62.24±0.77} & 45.32±1.56 & 4.89±0.27 & 18.37±0.31 & 4.32±0.11 \\
          & DFL   & 87.11±0.48 & 1.17±0.06 & 2.04±0.13 & 60.91±1.27 & 46.51±0.79 & \underline{2.96±0.26} & 18.11±0.66 & 4.09±0.15 \\
          & RC    & 87.21±0.29 & 1.23±0.05 & 2.08±0.08 & 59.35±1.32 & 48.24±1.78 & 6.22±0.26 & 19.22±0.33 & 4.76±0.09 \\
          \rowcolor{gray!25} & \drc  & \underline{87.53±0.42} & {\textbf{0.94±0.04}} & {\textbf{1.75±0.09}} & {\textbf{62.53±0.60}} & {\textbf{42.47±0.65}} & {\textbf{2.23±0.42}} & {\textbf{17.45±0.51}} & {\textbf{4.01±0.14}} \\

    \midrule
    \multicolumn{1}{c}{\multirow{9}[0]{*}{\makecell{Image\\NetBG\\(hard set)}}} & ERM   & 83.52±0.15 & \underline{1.68±0.03} & \underline{3.12±0.06} & 62.37±1.06 & \underline{53.02±1.13} & 4.61±0.10 & 23.13±0.37 & 5.25±0.10 \\
          & PC    & 82.91±0.47 & 1.83±0.11 & 3.38±0.19 & 62.65±0.57 & 54.36±2.34 & 9.40±0.63 & 25.93±1.17 & 6.77±0.46 \\
          & LS    & \underline{83.81±0.52} & 2.07±0.03 & 3.46±0.12 & 61.05±0.67 & 53.05±0.90 & 4.21±0.30 & \underline{22.86±0.62} & 5.31±0.16 \\
          & FLSD  & 82.63±0.43 & 2.05±0.08 & 3.66±0.16 & 61.96±0.92 & 54.95±0.86 & 7.92±0.44 & 24.99±0.44 & 5.67±0.10 \\
          & FL    & 82.72±0.25 & 1.99±0.11 & 3.57±0.14 & 62.59±1.83 & 54.38±2.00 & 4.04±0.17 & 24.01±0.48 & 5.38±0.12 \\
          & IFL   & 83.41±0.35 & 1.73±0.06 & 3.19±0.11 & \underline{62.87±0.77} & 53.79±1.17 & 6.43±0.37 & 23.87±0.47 & 5.61±0.16 \\
          & DFL   & 83.28±0.64 & 1.97±0.10 & 3.46±0.21 & 61.44±1.35 & 54.90±0.88 & \underline{3.03±0.35} & 23.36±0.86 & \underline{5.24±0.21} \\
          & RC    & 83.41±0.37 & 2.04±0.09 & 3.51±0.14 & 59.99±1.34 & 56.03±1.06 & 8.14±0.34 & 24.93±0.41  & 6.18±0.12 \\
          \rowcolor{gray!25} & \drc  & {\textbf{83.83±0.55}} & {\textbf{1.58±0.05}} & {\textbf{2.97±0.15}} & {\textbf{63.01±0.62}} & {\textbf{51.31±0.14}} & {\textbf{2.97±0.56}} & {\textbf{22.59±0.67}} & {\textbf{5.18±0.18}} \\

    \midrule
    \multicolumn{1}{c}{\multirow{9}[0]{*}{\makecell{Food\\101\\(ID)}}} & ERM   & 87.26±0.14 & 1.25±0.02 & 2.10±0.02 & 58.21±1.04 & 50.47±1.13 & 2.37±0.08 & \underline{18.44±0.12} & \underline{4.68±0.04} \\
          & PC    & \underline{87.54±0.08} & \underline{1.24±0.01} & \underline{2.05±0.02} & 56.87±0.23 & 50.58±0.30 & 5.87±0.12 & 19.17±0.11 & 5.45±0.02 \\
          & LS    & 87.33±0.03 & 1.80±0.03 & 2.64±0.03 & 53.76±0.22 & 54.59±0.56 & 19.70±0.16 & 23.58±0.03 & 6.91±0.01 \\
          & FLSD  & 85.98±0.09 & 1.55±0.01 & 2.58±0.01 & 57.34±0.55 & 54.40±0.37 & 6.04±0.04 & 20.90±0.08 & 5.21±0.02 \\
          & FL    & 86.32±0.22 & 1.45±0.02 & 2.43±0.05 & \underline{58.26±0.43} & 51.60±0.45 & 3.62±0.12 & 19.91±0.17 & 4.95±0.04 \\
          & IFL   & {\textbf{87.67±0.07}} & {\textbf{1.22±0.04}} & {\textbf{2.01±0.03}} & 57.87±0.80 & {\textbf{49.71±1.12}} & 5.05±0.07 & 18.58±0.07 & 5.10±0.01 \\
          & DFL   & 86.80±0.11 & 1.40±0.04 & 2.31±0.06 & 57.31±0.47 & 52.08±1.04 & {\textbf{1.29±0.13}} & 19.15±0.23 & 4.80±0.06 \\
          & RC    & 87.23±0.01 & 1.26±0.03 & 2.12±0.03 & 57.80±1.23 & 50.83±1.60 & 3.04±0.08 & 18.65±0.12 & 4.79±0.05 \\
          \rowcolor{gray!25} & \drc  & 87.32±0.12 & 1.25±0.01 & 2.09±0.03 & {\textbf{58.27±0.42}} & \underline{50.33±0.76} & \underline{2.17±0.13} & {\textbf{18.34±0.13}} & {\textbf{4.67±0.02}} \\

    \midrule          
    \multicolumn{1}{c}{\multirow{9}[0]{*}{\makecell{Came\\lyon\\(hard set)}}} & ERM   & 90.21±0.38 & 1.41±0.06 & 1.91±0.10 & 38.69±0.55 & 63.63±0.88 & \underline{4.60±0.19} & \underline{14.89±0.51} & \underline{2.59±0.08} \\
          & PC    & 87.30±0.40 & 1.97±0.10 & 2.81±0.14 & {\textbf{40.47±0.75}} & 70.33±0.54 & 9.91±0.80 & 21.92±1.14 & 5.09±0.54 \\
          & LS    & \underline{92.52±0.48} & 1.28±0.16 & 1.57±0.19 & 34.84±0.71 & \underline{58.96±1.14} & 18.69±0.36 & 18.66±0.47 & 3.49±0.06 \\
          & FLSD  & 92.24±0.28 & 1.12±0.05 & 1.43±0.07 & 35.29±0.31 & 59.69±1.19 & 12.81±0.27 & 15.83±0.17 & 2.85±0.03 \\
          & FL    & 92.30±0.24 & \underline{1.11±0.04} & \underline{1.41±0.06} & 35.17±0.12 & 59.67±1.09 & 12.91±0.28 & 15.82±0.10 & 2.85±0.02 \\
          & IFL   & 88.99±0.85 & 1.59±0.19 & 2.23±0.29 & \underline{40.02±0.69} & 65.91±1.59 & 7.15±0.92 & 17.71±1.57 & 3.36±0.33 \\
          & DFL   & 90.60±1.30 & 1.72±0.21 & 2.19±0.34 & 36.44±2.74 & 61.35±0.52 & 7.18±0.96 & 15.39±0.93 & 2.66±0.11 \\
          & RC    & 91.55±0.58 & 2.62±0.33 & 2.99±0.38 & 31.57±1.27 & 64.19±2.47 & 7.10±0.60 & 15.28±1.16 & 5.31±0.58 \\
          \rowcolor{gray!25} & \drc  & {\textbf{93.43±0.22}} & {\textbf{0.83±0.06}} & {\textbf{1.05±0.07}} & 34.34±0.92 & {\textbf{57.07±1.58}} & {\textbf{2.30±0.38}} & {\textbf{10.16±0.39}} & {\textbf{1.88±0.10}} \\

    \bottomrule
    \toprule
    \multirow{2}[2]{*}{Dataset} & \multicolumn{1}{c|}{\multirow{2}[2]{*}{Method}} & \multicolumn{1}{c}{ECE} & \multicolumn{1}{c}{Brier} & \multicolumn{1}{c|}{NLL} & \multirow{2}[2]{*}{Dataset} & \multicolumn{1}{c|}{\multirow{2}[2]{*}{Method}} & \multicolumn{1}{c}{ECE} & \multicolumn{1}{c}{Brier} & \multicolumn{1}{c}{NLL} \\
          & \multicolumn{1}{c|}{} & \multicolumn{1}{c}{(↓)}   & (↓)   & \multicolumn{1}{c|}{(↓)}   &       & \multicolumn{1}{c|}{} & (↓)   & (↓)   & (↓) \\
    \midrule
    \multicolumn{1}{c}{\multirow{9}[0]{*}{\makecell{CIFAR\\-8-2\\(hard set)}}} & ERM   & \multicolumn{1}{c}{26.42±2.31} & 103.12±1.77 & \multicolumn{1}{c|}{27.70±1.06} & \multicolumn{1}{c}{\multirow{9}[0]{*}{\makecell{CIFAR\\-80-20\\(hard set)}}} & ERM   & 47.87±13.87 & 134.26±14.91 & 87.04±27.59 \\
          & PC    & \multicolumn{1}{c}{15.17±1.88} & \underline{96.69±1.59} & \multicolumn{1}{c|}{25.52±0.86} &       & PC    & 37.96±0.75 & 124.76±0.48 & 67.83±1.09 \\
          & LS    & \multicolumn{1}{c}{39.34±4.34} & 113.79±4.10 & \multicolumn{1}{c|}{29.97±1.45} &       & LS    & 30.26±3.68 & 116.36±2.94 & \underline{58.00±2.39} \\
          & FLSD  & \multicolumn{1}{c}{52.43±0.34} & 127.95±0.66 & \multicolumn{1}{c|}{45.84±0.71} &       & FLSD  & 55.18±0.87 & 140.49±0.93 & 94.46±1.79 \\
          & FL    & \multicolumn{1}{c}{43.23±7.89} & 118.76±8.28 & \multicolumn{1}{c|}{37.93±6.23} &       & FL    & 30.87±1.14 & 116.85±0.95 & 60.91±0.90 \\
          & IFL   & \multicolumn{1}{c}{\underline{14.40±1.39}} & 96.78±0.89 & \multicolumn{1}{c|}{\underline{25.34±0.60}} &       & IFL   & 54.86±2.30 & 141.27±2.32 & 93.67±4.71 \\
          & DFL   & \multicolumn{1}{c}{60.51±1.02} & 138.24±1.58 & \multicolumn{1}{c|}{55.67±2.37} &       & DFL   & \underline{29.68±2.15} & \underline{116.09±1.92} & 59.39±1.78 \\
          & RC    & \multicolumn{1}{c}{58.97±12.40} & 137.18±14.39 & \multicolumn{1}{c|}{74.23±24.88} &       & RC    & 65.26±6.81 & 152.99±7.96 & 128.48±14.80 \\
          \rowcolor{gray!25} & \drc  & \multicolumn{1}{c}{{\textbf{8.81±1.51}}} & {\textbf{92.69±0.66}} & \multicolumn{1}{c|}{{\textbf{22.93±0.35}}} &       & \drc  & {\textbf{20.57±0.67}} & {\textbf{112.30±0.61}} & {\textbf{51.92±0.33}} \\
    \bottomrule
    \end{tabular}%
  \label{tab:strong_aug_all}%
\end{table*}%

\begin{figure*}[!ht] 
	\centering  
	\subfigure{
		\includegraphics[width=0.22\linewidth]{./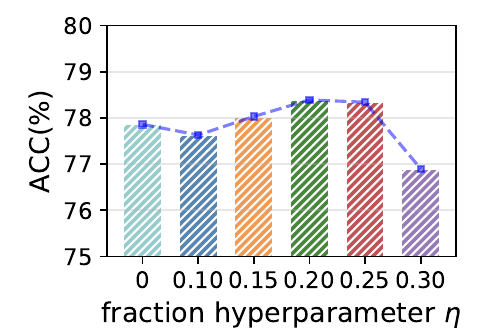}} 
	\subfigure{
		\label{level.sub.2}
		\includegraphics[width=0.22\linewidth]{./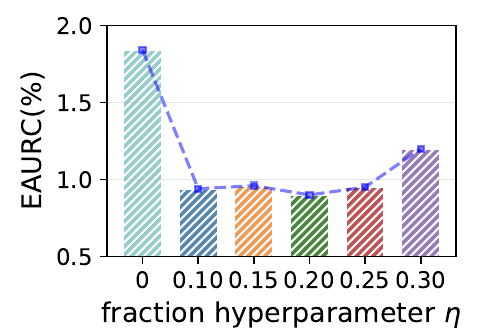}} 
	\subfigure{
		\label{level.sub.3}
		\includegraphics[width=0.22\linewidth]{./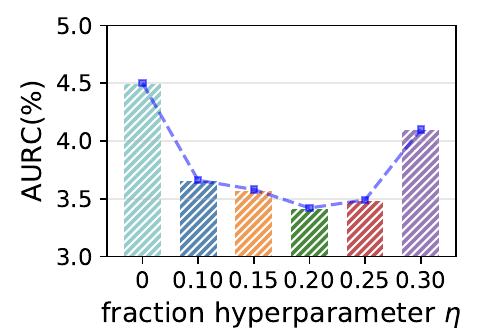}} 
	\subfigure{
		\label{level.sub.4}
		\includegraphics[width=0.22\linewidth]{./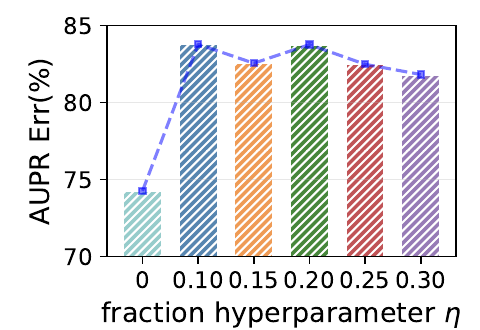}} 
 	\subfigure{
		\label{level.sub.5}
		\includegraphics[width=0.22\linewidth]{./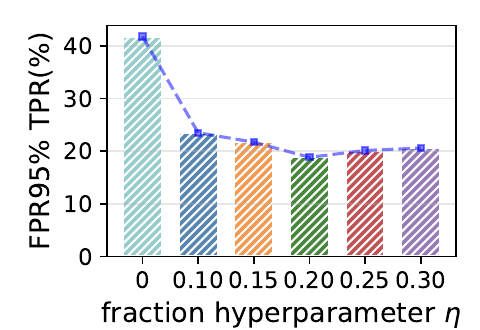}} 
   	\subfigure{
		\label{level.sub.6}
		\includegraphics[width=0.22\linewidth]{./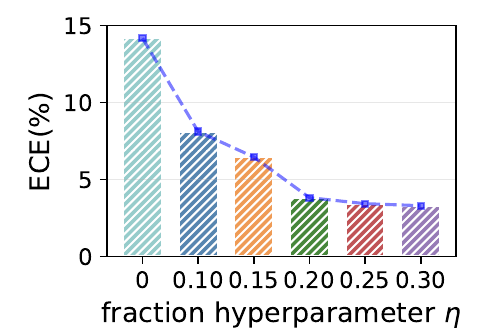}} 
       \subfigure{
		\label{level.sub.7}
		\includegraphics[width=0.22\linewidth]{./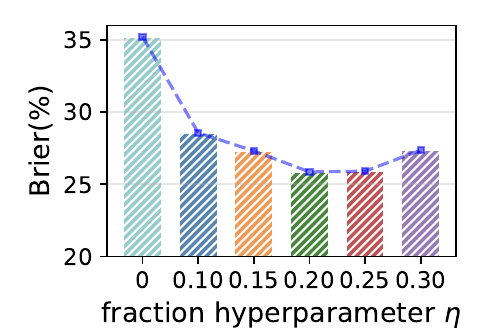}} 
       \subfigure{
		\label{level.sub.8}
		\includegraphics[width=0.22\linewidth]{./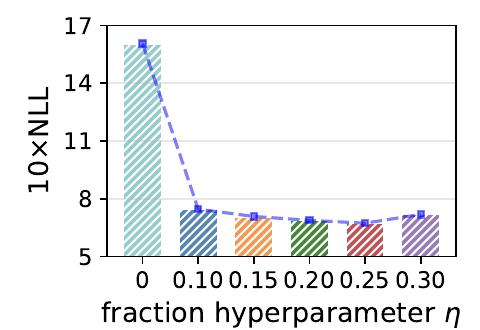}} 
	\caption{Performance of the model on multiple metrics with varying fraction hyperparameter $\eta$, while fixing the regularization strength $\beta$ to 1, on the CIFAR-8-2 dataset.}
 \label{fig:sup:cifar10:eta}
\end{figure*}

\begin{figure*}[!htbp] 
	\centering  
	\subfigure{
		\includegraphics[width=0.22\linewidth]{./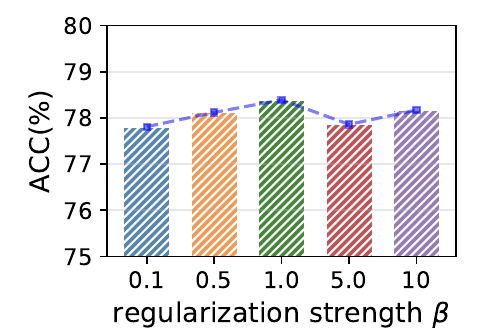}} 
	\subfigure{
		\label{level.sub.2}
		\includegraphics[width=0.22\linewidth]{./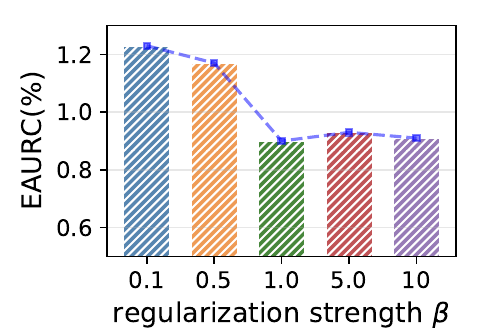}} 
	\subfigure{
		\label{level.sub.3}
		\includegraphics[width=0.22\linewidth]{./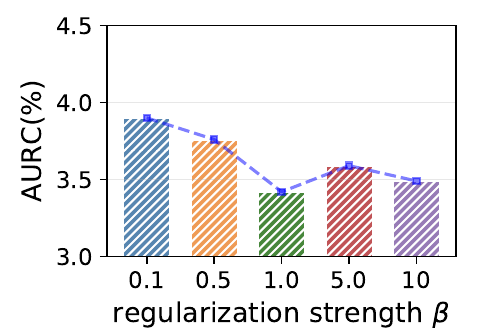}} 
	\subfigure{
		\label{level.sub.4}
		\includegraphics[width=0.22\linewidth]{./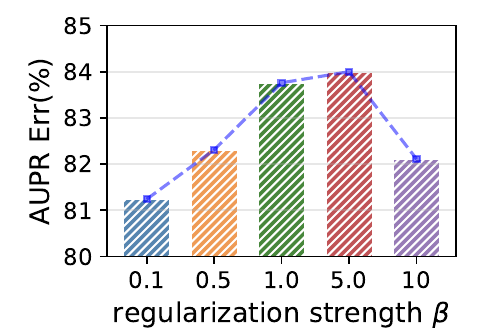}} 
 	\subfigure{
		\label{level.sub.5}
		\includegraphics[width=0.22\linewidth]{./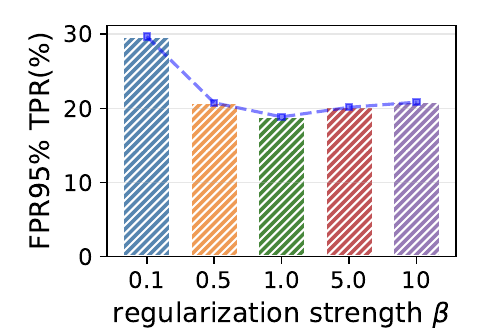}} 
   	\subfigure{
		\label{level.sub.6}
		\includegraphics[width=0.22\linewidth]{./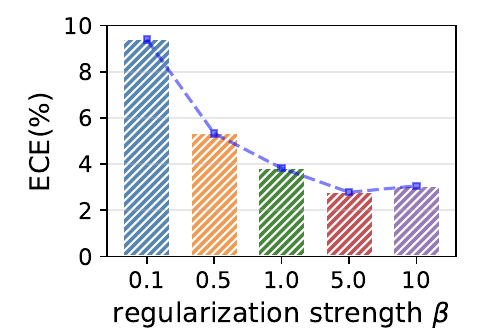}} 
       \subfigure{
		\label{level.sub.7}
		\includegraphics[width=0.22\linewidth]{./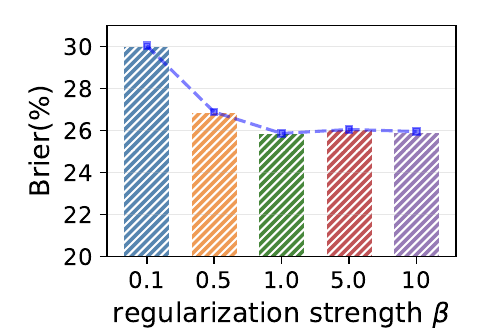}} 
       \subfigure{
		\label{level.sub.8}
		\includegraphics[width=0.22\linewidth]{./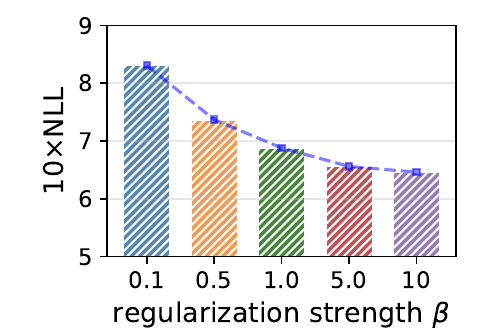}} 
	\caption{Performance of the model on different metrics with varying regularization strength hyperparameter $\beta$, while fixing the  outlier fraction hyperparameter $\eta$ to 0.2, on the CIFAR-8-2 dataset.}
 \label{fig:sup:cifar10:beta}
\end{figure*}
\textbf{Datasets.} We conduct extensive experiments on multiple datasets with potential challenging samples in the training or test dataset, including \texttt{CIFAR-8-2}, \texttt{CIFAR-80-20} \cite{krizhevsky2009learning}, \texttt{Food101} \cite{bossard2014food}, \texttt{Camelyon17} \cite{bandi2018detection,koh2021wilds}, and \texttt{ImageNetBG} \cite{xiao2020noise}. The datasets used in the experiments are described in detail here.
\begin{itemize}
\item \texttt{CIFAR-8-2}: The CIFAR-8-2 dataset is artificially constructed to evaluate the performance of the model when the fraction of challenging samples in the dataset is available. Specifically, we randomly select 8 classes from CIFAR10 \cite{krizhevsky2009learning} as in-distribution samples, while the remaining samples are randomly relabeled as one of the selected classes to serve as challenging samples beyond capability of the model. Then the  fraction $\eta$ of the CIFAR-8-2 dataset is about 20\%. 
\item \texttt{CIFAR-80-20}: Samilar as CIFAR-8-2, we randomly select 80 classes from CIFAR100 \cite{krizhevsky2009learning} dataset, and relabel other samples as one of the selected classes to serve as challenging samples (outliers). The true challenging samples fraction $\eta$ of the CIFAR-80-20 dataset is about 20\%. 
\item \texttt{Camelyon17}: Camelyon17 is a pathology image dataset containing over 450,000 lymph node scans from 5 different hospitals, used for detecting cancerous tissues in images \cite{bandi2018detection}. Similar to previous work \cite{koh2021wilds}, we take part of the data from 3 hospitals as the training set. The remaining data from these 3 hospitals, together with data from another hospital, are used as the validation set. The last hospital is used as a challenging test set with distributional shift samples. Notably, due to differences in pathology staining methods between hospitals, even data within the same hospital can be seen as sampled from multiple subpopulations. We verify on the Camelyon17 dataset whether models can achieve more robust generalization performance when the training set contains multiple subgroups.
\item \texttt{ImageNetBG}: ImageNetBG is a benchmark dataset for evaluating the dependence of classifiers on image backgrounds \cite{xiao2020noise}. It consists of a 9-class subset of ImageNet (ImageNet-9) and provides bounding boxes that allow removing the background. Similar as in previous settings \cite{yang2023change}, we train models on the original IN-9L (with background) set, adjust hyperparameters based on validation accuracy, and evaluate not only on the test set (in-distribution data), but also on the MIXED-RAND, NO-FG and ONLY-FG test set (challenging samples).
\item \texttt{Food101}: Food101 is a commonly used food classification dataset containing 101 food categories with a total of 101,000 images \cite{bossard2014food}. For each category, there are 750 training images and 250 manually verified test images. The training images are intentionally unclean and contain some amount of noise, primarily in the form of intense colors and occasionally wrong labels, which can be seen as challenging samples in the training dataset. 
\end{itemize}

\textbf{Evaluation metrics.} Following the standard metrics used in previous works \cite{moon2020confidence, corbiere2019addressing}, we evaluate the performance from three perspectives: (1) the accuracy (ACC) on the test set; (2) ordinal ranking based confidence evaluation, including area under the risk coverage curve (AURC), excess-AURC (EAURC) \cite{geifman2018bias}, false positive rate when the true positive rate is 95\% (FPR95\%TPR), and area under precision-recall curve with incorrectly classified examples as the positive class (AUPRErr) \cite{corbiere2019addressing}; (3) calibration-based confidence evaluation, including expected calibration error (ECE) \cite{guo2017calibration}, the Brier score (Brier) \cite{brier1950verification} and negative log likelihood (NLL). For datasets where the test set consists of a mixture of in-distribution and challenging samples, we show both the performance on all test samples and the performance on only challenging samples. For the CIFAR-8-2 and CIFAR-80-20 datasets, since the labels of the challenging samples are set randomly, we only report the calibration-based confidence evaluation metrics. 
The evaluation metrics used in the experiments are described in detail here.
\begin{itemize}
\item \texttt{AURC} and \texttt{EAURC}: The AURC is defined as the area under the risk-coverage curve \cite{geifman2017selective}, where risk represents the error rate and coverage refers to the proportion of samples with confidence estimates exceeding a specified confidence threshold. A lower AURC indicates that correct and incorrect samples can be effectively separated based on the confidence of the samples. However, AURC is influenced by the predictive performance of the model. To allow for meaningful comparisons across models with different performance, Excess-AURC (E-AURC) is proposed by \cite{geifman2018bias} by subtracting the optimal AURC (the minimum possible value for a given model) from the empirical AURC.
\item \texttt{AUPRErr}: AUPRErr represents the area under the precision-recall curve where misclassified samples (i.e., incorrect predictions) are used as positive examples. This metric can evaluate the capability of the error detector to distinguish between incorrect and correct samples. A higher AUPRErr usually indicates better error detection performance \cite{corbiere2019addressing}.
\item  \texttt{FPR95\%TPR}: The FPR95\%TPR metric measures the false positive rate (FPR) when the true positive rate (TPR) is fixed at 95\%. This metric can be interpreted as the probability that an incorrect prediction is mistakenly categorized as a correct prediction, when the TPR is set to 95\%.
\item \texttt{ECE}: The Expected Calibration Error (ECE) provides a measure of the alignment between the predicted confidence scores and labels. It partitions the confidence sores into multiple equally spaced intervals, computes the difference between accuracy and average confidence in each interval, and then aggregates the results weighted by the number of samples. Lower ECE usually indicates better calibration.
\item \texttt{NLL}: The Negative Log Likelihood (NLL) measures the log loss between the predicted probabilities and the one-hot label encodings. Lower NLL corresponds to higher likelihood of the predictions fitting the true distribution.
\item \texttt{Brier}: The Brier score calculates the mean squared error between the predicted probabilities and the one-hot label.
\end{itemize}

\begin{figure*}[!ht] 
	\centering  
	\subfigure{
		\includegraphics[width=0.22\linewidth]{./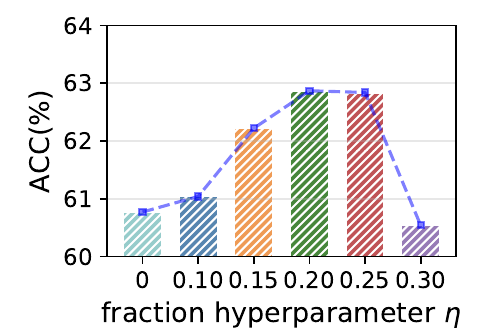}} 
	\subfigure{
		\label{level.sub.2}
		\includegraphics[width=0.22\linewidth]{./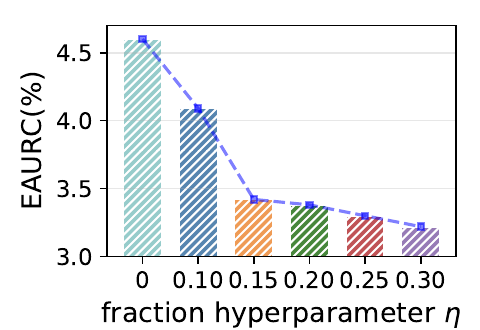}} 
	\subfigure{
		\label{level.sub.3}
		\includegraphics[width=0.22\linewidth]{./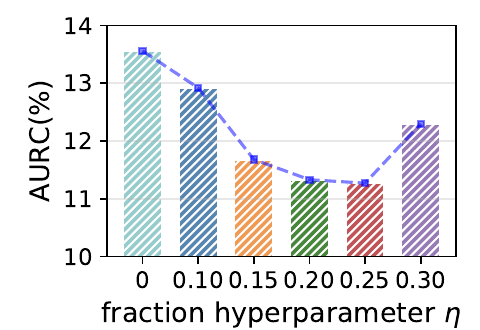}} 
	\subfigure{
		\label{level.sub.4}
		\includegraphics[width=0.22\linewidth]{./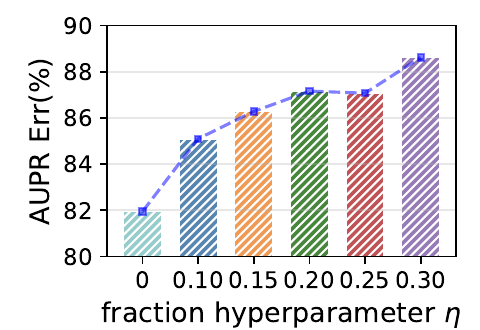}} 
 	\subfigure{
		\label{level.sub.5}
		\includegraphics[width=0.22\linewidth]{./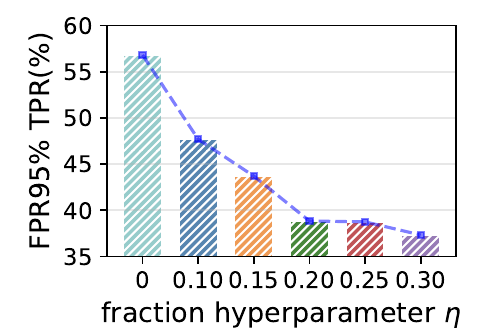}} 
   	\subfigure{
		\label{level.sub.6}
		\includegraphics[width=0.22\linewidth]{./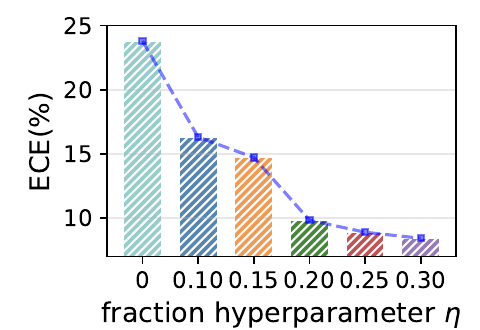}} 
       \subfigure{
		\label{level.sub.7}
		\includegraphics[width=0.22\linewidth]{./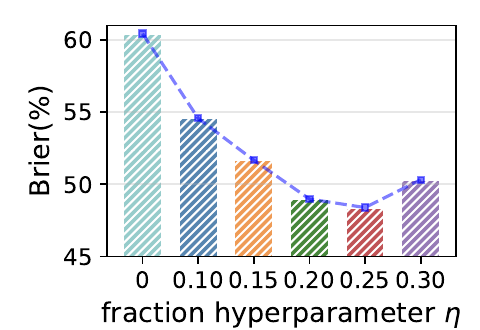}} 
       \subfigure{
		\label{level.sub.8}
		\includegraphics[width=0.22\linewidth]{./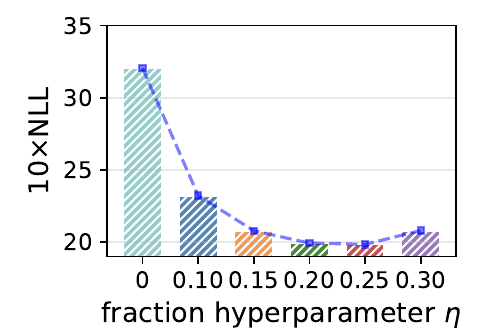}} 
	\caption{Performance of the model on multiple metrics with varying challenging samples fraction hyperparameter $\eta$, while fixing the regularization strength $\beta$ to 1, on the CIFAR-80-20 dataset.}
  \label{fig:sup:cifar100:eta}
\end{figure*}

\begin{figure*}[!htbp] 
	\centering  
	\subfigure{
		\includegraphics[width=0.22\linewidth]{./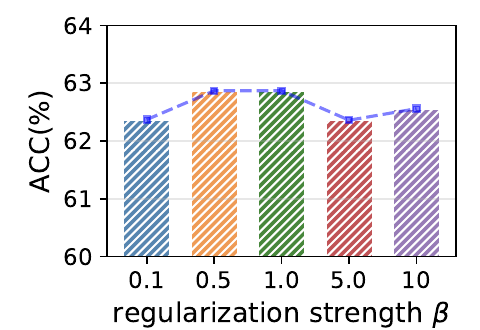}} 
	\subfigure{
		\label{level.sub.2}
		\includegraphics[width=0.22\linewidth]{./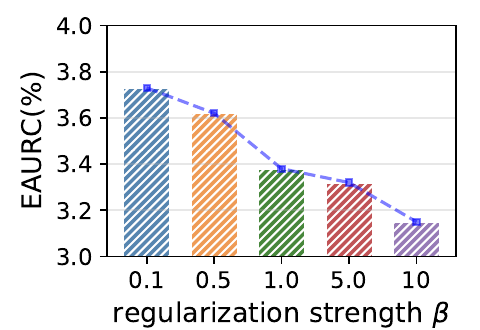}} 
	\subfigure{
		\label{level.sub.3}
		\includegraphics[width=0.22\linewidth]{./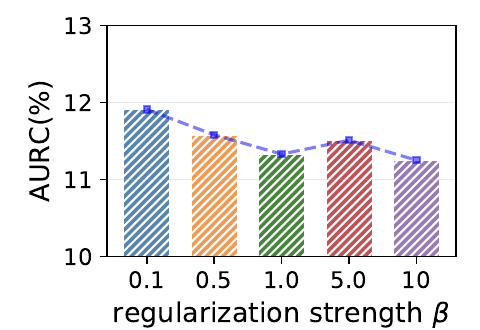}} 
	\subfigure{
		\label{level.sub.4}
		\includegraphics[width=0.22\linewidth]{./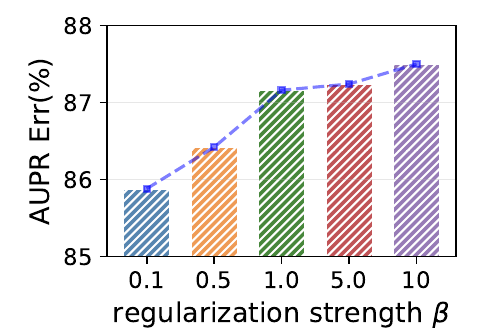}} 
 	\subfigure{
		\label{level.sub.5}
		\includegraphics[width=0.22\linewidth]{./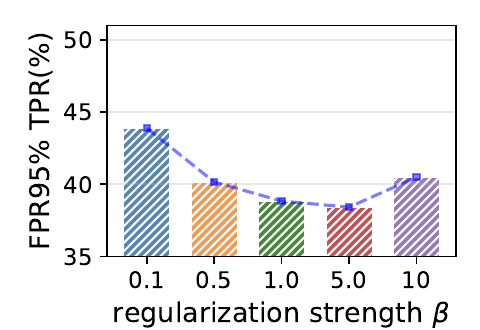}} 
   	\subfigure{
		\label{level.sub.6}
		\includegraphics[width=0.22\linewidth]{./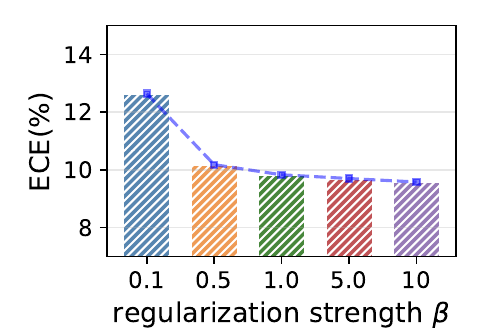}} 
       \subfigure{
		\label{level.sub.7}
		\includegraphics[width=0.22\linewidth]{./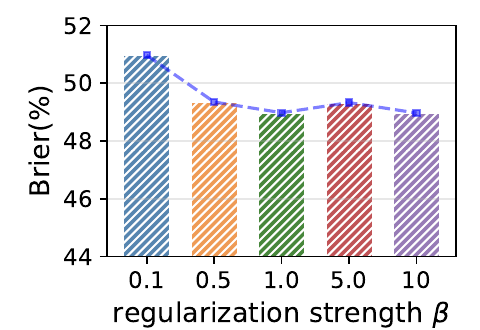}} 
       \subfigure{
		\label{level.sub.8}
		\includegraphics[width=0.22\linewidth]{./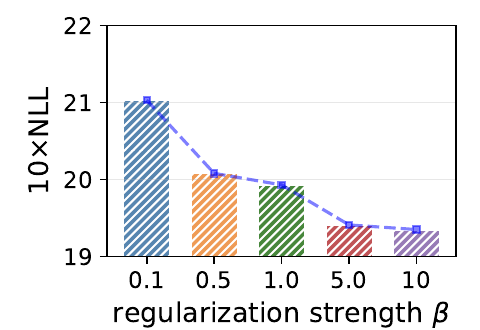}} 
	\caption{Performance of the model on different metrics with varying regularization strength hyperparameter $\beta$, while fixing the challenging samples fraction hyperparameter $\eta$ to 0.2, on the CIFAR-80-20 dataset.}
   \label{fig:sup:cifar100:beta}
\end{figure*}
\textbf{Experimental settings.}  
Since data augmentation has been widely applied during the training of deep neural networks and has achieved remarkable performance, we perform experiments under two different settings including standard setting and strong augmentation setting to validate the effectiveness of \drc. In this way, we can fully explore how challenging samples introduced by augmentation techniques affect the calibration methods. Specifically, under the standard setting, same as standard setup of deep neural network training \cite{he2016deep, zagoruyko2016wide}, we employ basic augmentation methods such as random cropping and image flipping. Moreover,  under the strong augmentation setting, we use random augmentation \cite{cubuk2020randaugment} and cutout \cite{devries2017improved} augmentation methods. For all methods, under the exactly same setting, we tune the hyperparameters based on the accuracy of validation set and run over three times to report the means and standard deviations. For the CIFAR-8-2 and CIFAR-80-20 datasets, we use a randomly initialized WideResNet-28-10 \cite{zagoruyko2016wide} as the backbone network; for the Camelyon17 dataset, we use a DenseNet-121 \cite{huang2017densely} network pre-trained on ImageNet as the backbone network; for other datasets, we use a ResNet-50 \cite{he2016deep} network pre-trained on ImageNet as the backbone network. For all methods, we adjust the hyperparameters according to the performance on the validation set to obtain the best accuracy, and report other corresponding metrics.

\textbf{Comparison methods.} We conduct comparative experiments with multiple baseline methods, including empirical risk minimization (ERM), penalizing confidence (PC) \cite{pereyra2017regularizing}, label smoothing (LS) \cite{muller2019does}, focal loss (FL) \cite{mukhoti2020calibrating}, sample dependent focal loss (FLSD) \cite{mukhoti2020calibrating}, 
inverse focal loss (IFL) \cite{wang2021rethinking}, 
dual focal loss \cite{tao2023dual}. Besides we conduct ablation study by removing challenging samples with high probability (RC) during training. The Comparison methods are described in detail here.
\begin{itemize}
    \item \texttt{ERM} trains the model by minimizing empirical risk on the training data, utilizing cross-entropy as the loss function.
    \item \texttt{PC} trains the model with cross-entropy loss while regularizing the neural networks by penalizing low-entropy predictions.
    \item \texttt{LS} is a regularization technique that trains the neural network with softened target labels.
    \item \texttt{FLSD} refers to a sample-dependent focal loss, where the hyperparameters of the focal loss are set differently for samples with varying confidence scores \cite{mukhoti2020calibrating}.
    \item  \texttt{FL} refers to focal loss, which implicitly regularizes the deep neural network by increasing the weight of samples with larger losses.
    \item \texttt{IFL} implements a simple modification to the weighting term of the original focal loss by assigning larger weights to samples with higher output confidences.
    \item \texttt{DFL} aims to achieve a better balance between over-confidence and under-confidence by maximizing the gap between the ground truth logit and the highest-ranked logit after the ground truth logit.
    \item \texttt{RC}: We conduct ablation studies by removing potential challenging samples beyond the capability of the model during training. Specifically, during training, given $B$ samples, we directly drop the top $\eta B$ samples with the highest losses, where $\eta$ is the predefined fraction of challenging samples.
\end{itemize}

\subsection{Experimental Results}
We conduct experiments to answer the above-posed questions. The main experimental results under standard and strong augmentation settings are presented in the Tab.~\ref{tab:weak_aug_all} and Tab.~\ref{tab:strong_aug_all} respectively.

\textbf{Q1 Effectiveness.} Compared to other methods, \drc achieves superior performance in terms of accuracy. Specifically, as shown in the experimental results, compared with previous methods, \drc achieves top two accuracy rankings consistently on almost all datasets under the standard and strong augmentation settings. 
For example, on the Camelyon17 dataset, \drc achieves the best accuracy performance of 87.46\% and 93.43\% under standard and strong augmentation settings. This improvement can be attributed to the ability of \drc to prevent the neural network from overfitting to challenging samples in the training data, thereby improving the generalization and classification accuracy on the test data slightly. Although \drc demonstrates superior performance, please note that the main goal of calibration methods is to calibrate the model thereby improving trustworthiness, rather than to simply enhance accuracy.

\textbf{Q2 Reliability.} We can obtain the following observations from the experimental results. (1) \drc can obtain the state-of-the-art confidence quality in terms of ranking-based metrics (e.g., EAURC) across almost all datasets. Specifically, under the standard setting, \drc achieves superior EAURC performance on all datasets.  For example, on the CIFAR-80-20 dataset, \drc outperforms the second best method by 0.43\% in terms of EAURC. Moreover, under the strong-augmentation setting, \drc achieves the best performance across all datasets except Food101 dataset. (2) \drc also demonstrates outstanding performance on calibration-based metrics (e.g., ECE). For example, under standard setting, \drc achieves 3.82\% performance on the full test set of CIFAR-8-2 dataset, which is 1.82\% lower than the second best method. The key reason behind the performance improvements is that \drc effectively leverages challenging samples in the training set to provide more explicit confidence supervision to improve the confidence quality, resulting in more reliable predictions.

\textbf{Q3 Robustness.} To evaluate the robustness of \drc, we also show the performance of the different methods on the challenging test dataset, which may exceed the capabilities of the model due to distribution shifts or inherent difficulties in classification. We can draw the following observations. (1) When evaluated on challenging datasets, the performance usually decreases. For example, on the challenging subset of the ImageNetBG test set, the performance is lower across all metrics compared to the full test set. This highlights the necessity to study robust calibration methods. (2) \drc shows excellent performance on challenging test datasets. For example, under the standard setting, \drc achieves 10.24\% and 15.38\% in terms of ECE on the challenging subset of CIFAR-8-2 and CIFAR-80-20, outperforming the second best by 3.77\% and 14.27\% respectively. This indicates that \drc is more robust to potential challenging data, because it effectively utilizes the challenging data during training to impose clear guidance about what should be unknown for the model.

\textbf{Q4 Ablation study.} As shown in Tab.~\ref{tab:weak_aug_all} and Tab.~\ref{tab:strong_aug_all}, compared with ERM and removing challenging samples (RC), \drc consistently shows better performance. Specifically, our experiments demonstrate that utilizing challenging samples for regularization during deep neural network training through \drc can lead to more reliable confidence estimation than ERM and RC. 

\textbf{Q5 Hyperparameter Analysis.} We present the detailed hyperparameter analysis results on the CIFAR-8-2 and CIFAR-80-20 datasets in Fig.\ref{fig:sup:cifar10:eta}, Fig.\ref{fig:sup:cifar10:beta}, Fig.\ref{fig:sup:cifar100:eta} and Fig.\ref{fig:sup:cifar100:beta}. 
Specifically, to evaluate the effect of the challenging samples fraction hyperparameter $\eta$ and regularization strength $\beta$ on the model, we tune one hyperparameter while fixing the other. 
From the experimental results we can draw the following conclusions: (1) As shown in Fig.\ref{fig:sup:cifar10:eta} and Fig.\ref{fig:sup:cifar100:eta}, when the set fraction hyperparameter $\eta$ is close to the true challenging  samples ratio, the model can achieve relatively optimal performance. Meanwhile increasing $\eta$ within a certain range does not significantly degrade the model performance. For example, on most metrics of the CIFAR-8-2 and CIFAR-80-20 datasets, the relatively best performance is achieved at $\eta=0.2$. (2) As shown in Fig.\ref{fig:sup:cifar10:beta} and Fig.\ref{fig:sup:cifar100:beta}, we can find that increasing the regularization strength $\beta$ to a certain level yields relatively good performance, after which further increases does not significantly improve the results. For instance, when $\beta$ exceeds 1, the performance of the model remains relatively stable, with most metrics changing slightly.

\section{Conclusion}
In this paper, we summarize the core principle of existing regularization-based calibration methods, and show their underlying limitations due to lack of explicit confidence supervision. To overcome these limitations, we introduce a simple yet effective approach called dynamic regularization calibration (\drc), which regularizes the model using challenging samples that may exceed the capability of the model in the training data, thus allowing us to provide the model with clear direction on confidence calibration by informing the model what it should learn and what it should not learn. \drc significantly outperforms existing methods on real-world datasets, achieving robust calibration performance. Moreover, the theoretical analyses show that \drc achieves a smaller calibration error over previous method. In this work, we first introduce the paradigm of dynamic regularization for calibration and provide a simple yet effective implementation. In the future, we believe exploring more elegant and effective strategies for dynamic regularization will be an interesting and promising direction.
\bibliographystyle{IEEEtran}
\bibliography{main}

\ifCLASSOPTIONcaptionsoff
  \newpage
\fi





\end{document}